\documentclass[lettersize,journal]{IEEEtran}

\usepackage{amsmath, amssymb, amsthm}
\usepackage{graphicx,tikz}
\usetikzlibrary{positioning, calc}
\usepackage{booktabs} 
\usepackage{multirow, multicol}

\usepackage{thmtools}
\usepackage{thm-restate} 
\declaretheorem[name=Theorem, numberwithin=section]{theorem}
\declaretheorem[name=Proposition, sibling=theorem]{proposition}

\theoremstyle{plain}

\theoremstyle{definition}

\newtheorem{assumption}[theorem]{Assumption}
\theoremstyle{remark}

\renewcommand{\vector}[1]{{\bf #1}}

\newcommand{\DA}{\Delta}
\newcommand{\trace}[1]{\mbox{tr}(#1)}

\newcommand{\bmad}{b_{\operatorname{MAD}}}
\newcommand{\bmed}{b_{\operatorname{med}}}
\newcommand{\sigmamad}{\sigma_{\operatorname{MAD}}}
\newcommand{\sigmamed}{\sigma_{\operatorname{med}}}

\newcommand{\R}{\mathbb{R}}
\newcommand{\I}{\mathcal{I}}
\newcommand{\Out}{\mathcal{O}}

\begin{document}

\title{SNAP: A Self-Consistent Agreement Principle \\ with Application to Robust Computation}

\author{Xiaoyi Jiang, Andreas Nienkötter}

\author{Xiaoyi Jiang, Andreas Nienkötter
        \thanks{Xiaoyi Jiang is with Faculty of Mathematics and Computer Science, University of Münster, Münster, Germany. Andreas Nienkötter is with Institute for Disaster Management and Reconstruction, Sichuan
University-Hongkong Polytechnic University, Chengdu, Sichuan, China.}
    }



\maketitle

\begin{abstract}
We introduce \textbf{SNAP} (Self-coNsistent Agreement Principle), a self-supervised framework for robust computation based on mutual agreement. Based on an Agreement–Reliability Hypothesis SNAP assigns weights that quantify agreement, emphasizing trustworthy items and downweighting outliers without supervision or prior knowledge. A key result is the \emph{Exponential Suppression of Outlier Weights}, ensuring that outliers contribute negligibly to computations, even in high-dimensional settings. We study properties of SNAP weighting scheme and show its practical benefits on vector averaging and subspace estimation. Particularly, we demonstrate that non-iterative SNAP outperforms the iterative Weiszfeld algorithm and two variants of multivariate median of means. SNAP thus provides a flexible, easy-to-use, broadly applicable approach to robust computation.

\end{abstract}

\section{Introduction}

In many computational and machine learning tasks, multiple candidate solutions, model predictions, or observed entities are available. Some are reliable, while others are noisy or erroneous. We observe a general pattern: \emph{reliable entities tend to agree with each other, whereas unreliable ones are dispersed}. We refer to this as the Agreement–Reliability Hypothesis (ARH) and argue that leveraging this structural property can significantly enhance robust computation. While the agreement–dispersion assumption is canonical in robust statistics, it is also evident in computational results. For example, ensemble methods in classification \cite{Xue2026} and clustering \cite{Zhang2022} postprocess computed outputs. In this work, we adopt a broad view encompassing both data and computation results. Although aspects of ARH have been noted previously, e.g. \cite{Franek2010}, it has not been systematically studied, nor has a general, self-supervised, task-agnostic framework been developed to exploit it.

We introduce the \textbf{Self-coNsistent Agreement Principle (SNAP)}, a general, self-supervised framework for robust computation based on mutual agreement. It consists of an agreement-based weighting mechanism and a weight-guided computation scheme. SNAP assigns weights by quantifying how consistently entities agree with one another, automatically emphasizing trustworthy items and downweighting unreliable ones. This provides robustness through collective consistency, without requiring supervision or prior knowledge.

\begin{figure}[t]
\centering
\makebox{
\includegraphics[width=0.5\linewidth]{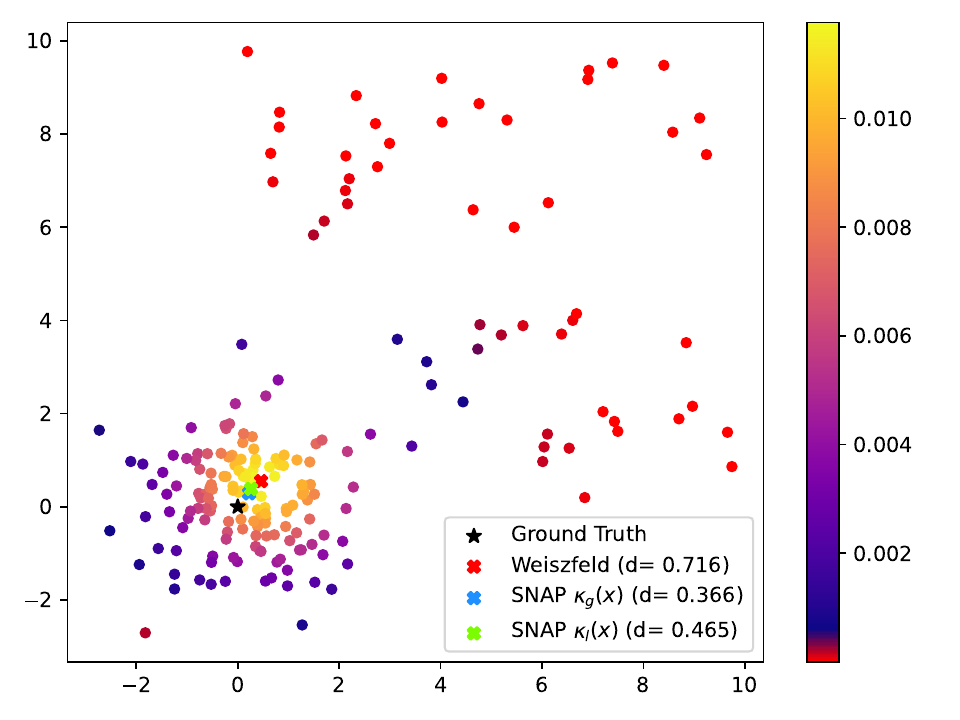}
\includegraphics[width=0.5\linewidth]{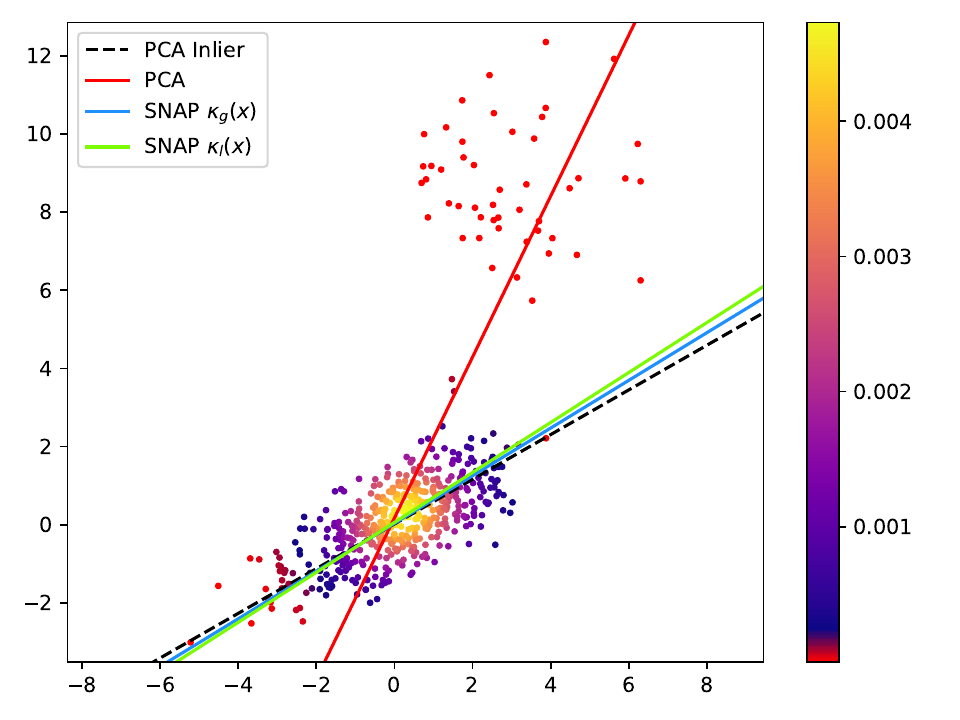}
}
\caption{Illustration of SNAP. Inliers have high mutual agreement and contribute strongly to the weighted consensus, while outliers contribute less. Data points are colored by their SNAP weight with $\kappa_l(\vector{x})$. Note the outliers have weights near zero, highlighted in red. Left: 2D vector averaging. Weiszfeld: Geometric median of all points; SNAP $\kappa_g$, SNAP $\kappa_l$: Robust averaging using all points with SNAP weights. Distance to ground truth is in brackets. Right: PCA.}
\label{fig:illustration}
\end{figure}

This paper focuses on the weighting mechanism of the framework. We propose a novel agreement-based weighting method and systematically study its properties. A key result is the \emph{Exponential Suppression of Outlier Weights} (Theorem~\ref{thm:key_result}), which shows that the influence of outliers vanishes at an exponential or super-exponential rate, making their contribution to the final computation negligible. Figure~\ref{fig:illustration} illustrates the core ideas of SNAP in these settings: outliers are highly dispersed and receive negligible weight, so the computation is driven primarily by inliers, yielding robust solutions even in high-dimensional regimes.



SNAP is a unifying principle for agreement-based robust computation with provable properties. Focusing on the weighting mechanism, this paper makes the following contributions:
1) \textbf{SNAP} concept and the agreement-based weighting scheme, which is data-driven, fully parameter-free;
2) Theoretical results that give a deeper understanding of the SNAP weighting scheme;
3) Brief presentation of SNAP weight-based robust computation (overview of the full framework) and validation of SNAP weighting on two problems: aggregation and subspace estimation;
4) Demonstration for vector averaging in $\mathbb{R}^m$ that non-iterative SNAP outperforms the widely used iterative Weiszfeld algorithm \cite{beck_weiszfelds_2015} and two variants of multivariate median of means. We envision establishing a general framework for SNAP-based easy-to-use robust computation. This paper presents its theoretical foundations along with initial applications.

\section{Related Work}

\noindent
{\bf Importance score computation.}  
Measuring the importance of data is key for computation. Centrality in network analysis \cite{Wan2021} is a prominent example, and importance scores are widely used in machine learning, e.g. graph attention networks \cite{Velickovic2018}. Distance-based importance measures relate closely to kernel density estimation \cite{Bishop2006}. SNAP weighting is motivated differently: it aims to distinguish inliers from outliers. While its form resembles softmax-normalized importance scores, there is a fundamental difference, which we discuss in detail in Section~\ref{sec:comparison}.

\noindent
{\bf Robust statistics.}  
Classical robust estimators modify the loss function, e.g. minimizing $\sum_i \phi(L_i)$ with robust penalties $\phi$ (Huber, Tukey) to reduce the influence of large residuals \cite{Maronna2019}. All samples contribute, but with bounded influence. SNAP, in contrast, achieves robustness via adaptive sample reweighting, optimizing $\sum_i w_i L_i$ with $w_i \in [0,1]$ determined from the data, downweighting outliers automatically.

\noindent
{\bf Weighted computation.}  
Many weighted computation schemes rely on domain-specific rules to handle input heterogeneity, e.g. in federated learning \cite{Gadekallu2026}. SNAP provides a general weighting framework to distinguish inliers from outliers, independent of task or domain. Although weighted computation is widely used \cite{Zhang2022}, the weight computation itself has rarely been systematically studied.

Further discussion of related work on robust estimation, geometry, and agreement principles in a broader context is provided in the Appendix \ref{app:related_work}.

\section{Agreement–Reliability Hypothesis}
\label{sec:arh}

SNAP is motivated by the Agreement–Reliability Hypothesis (ARH) based on
a simple and widely observed phenomenon: when multiple entities attempt to describe or estimate the same underlying quantity, those that are reliable tend to be similar, whereas unreliable ones tend to disagree with each other and also with the reliable entities. In such settings, agreement is a natural indicator of reliability, while dispersion signals uncertainty or noise. SNAP does not assume that all entities are reliable or that agreement is perfect; rather, it leverages the overall pattern of agreement and disagreement to infer consensus in a robust manner.

We formally define the ARH, which forms the foundation of the SNAP framework.

\noindent
{\bf Assumption (Agreement–Reliability Hypothesis).}
Let $\mathcal{E} = \{o_i\}_{i=1}^n$ in space $(\mathcal{X}, d)$ denote a set of consensus entities. There exists an unknown subset of inliers $\I$ and outliers $\I^c = \Out$ such that:
1) (\emph{Inlier agreement}) For all $i,j \in \I$, $d(o_i, o_j)$ is small with high probability.
2) (\emph{Outlier dispersion}) For any $i \in \Out$ and for most $j \in \{1,\dots,n\}$, $d(o_i, o_j)$ is large.
In particular, inlier entities form a concentrated cluster in $\mathcal{X}$, whereas outlier entities are not mutually concentrated and do not form a competing cluster. This assumption is satisfied in many practical situations, as observed in a variety of aggregation, estimation, and learning tasks. We will further quantify this assumption in Section \ref{sec:key_result}.

\section{Self-Consistent Agreement Principle}
\label{sec:SNAP_Principle}

\subsection{Agreement Weights}
\label{sec:consensus_weights}

Given $\mathcal{E}$, SNAP assigns a non-negative weight to each entity that reflects its level of agreement with the rest of the collection. These \emph{agreement weights} quantify how well each entity fits the assumption that mutually consistent entities correspond to reliable information, while dispersed entities are likely to be outliers or unreliable.
Formally, SNAP defines a weight vector
$\vector{w} = (w_1,\dots,w_n) \in \R^n$
satisfying three properties: 
1) $w_i \ge 0$ for all $i$;
2) $\sum_{i=1}^n w_i = 1$;
3) Agreement-dependence: $w_i$ increases with the degree of agreement between $o_i$ and other entities in $\mathcal{E}$.
{\em SNAP weights are not probabilities, but influence coefficients} that control how much each object should influences (contribute to) the robust computation. As such, they can be directly used in weighted computation procedures.

To instantiate the SNAP principle, we measure agreement through pairwise distances in space $(\mathcal{X}, d)$. In general, we do not assume the distance function $d(x,y)$ to be metric, but only require $d(x,x)=0$. For each entity $o_i$, we define a normalized disagreement score:
\begin{equation}
\DA_i \ = \ \frac{D_i}{D_a = \sum_{i=1}^n D_i} \ = \ \frac{\sum_{j\not=i} d(o_i,o_j)}{\sum_{i=1}^n\sum_{j=1}^n d(o_i,o_j)}
\label{eqn:disagreement_score}
\end{equation}
The symbol $a$ in $D_a$ symbolizes all pairs. This disagreement score is then converted into a (normalized) agreement weight:
\begin{equation}
w_i = \frac{\kappa(\DA_i)}{\sum_{j=1}^n \kappa(\DA_j)}
\label{eqn:agreement_weight}
\end{equation}
where $\kappa(x)$ is a decreasing kernel function. Entities that are mutually consistent with many others obtain higher agreement weights.

\subsection{Formal Consideration of $\kappa$ Function}

We consider Laplacian and Gaussian:
\begin{equation}
e^{-x/\sigma}, \quad e^{-x^2/{\sigma^2}}
\end{equation}
$\sigma>0$ and use them to show theoretical results. Note that the constant factor before the exponential function is ignored since it disappears when computing the weights $w_i$.

For practical implementation we model the situation more precisely. The object with the smallest disagreement score $\DA_{\mbox{sm}}$, also called set median \cite{Nienkotter23}, is the most representative object (see Proposition \ref{prop:set_median}). Then, we model $\kappa$ with {\em half} Laplacian or Gaussian:
\[
\kappa_l(\DA) = e^{-\frac{\DA-\DA_{\mbox{sm}}}{b}}
, \:
\kappa_g(\DA) = e^{-\frac{\left(\DA-\DA_{\mbox{sm}}\right)^2}{2 \sigma^2}}
; \: \DA \geq \DA_{\mbox{sm}}
\]
Let $\DA_i^* = \DA_i - \DA_{\mbox{sm}}$. For robust computation we apply robust estimation for half distributions \cite{Maronna2019,gui2014generalization_slash_half_normal,alvarez2023modified}:
\[ \bmad \ = \ 1.4427 \ \mbox{MAD}, \quad \sigmamad \ = \ 2.2631 \ \mbox{MAD} \]
where MAD is Median Absolute Deviation: $\mbox{MAD} \ = \ \mbox{med}(\DA^* - \mbox{med}(\DA^*))$. There is another robust estimator:
\[ \bmed \ = \ 1.4427 \ \mbox{med}(\DA^*), \quad \sigmamed \ = \ 1.4826 \ \mbox{med}(\DA^*) \]
We will study both variants in our experiments. Given this formal consideration of $\kappa$ function, {\em the computation of agreement weights $w_i$ is completely parameter-free}.

\subsection{Comparison with Softmax-Normalized Importance Scores}
\label{sec:comparison}

Softmax-normalized importance scores over nodes are widely used in machine learning, e.g. in energy-based models \cite{LeCun2006} and graph attention networks \cite{Velickovic2018}, where exponentiated energies or distances induce probability distributions over states or nodes. A popular weighting scheme assigns weights to objects based on the sum of distances to all others:
\[
s_i = \sum_j d(x_i, x_j), \quad
w_i = \frac{k(s_i)}{\sum_j k(s_j)}
\]
Such schemes depend on the {\em absolute scale of distances}. As a result, weights is not scaling invariant.

The SNAP approach introduces a {\em fundamental novelty} by normalizing distances relative to the entire dataset, see \eqref{eqn:disagreement_score}, \eqref{eqn:agreement_weight}.
This simple modification yields {\em scale‑invariant, globally informed weights}. Weights reflect an object’s distance relative to the entire dataset, automatically downweighting outliers in a way that accounts for overall structure rather than absolute magnitude. Any decreasing kernel can be used to adjust robustness without manual outlier thresholds, unlike fixed trimming or thresholding methods \cite{Maronna2019}.
In summary, SNAP moves from absolute to relative distance weighting, providing robust, interpretable, and adaptable weights, which is a novel contribution beyond existing distance‑based schemes.

\subsection{Properties of Agreement Weights}

SNAP agreement weights provide a compressed, relational encoding of mutual agreement. We present several properties that offer further insight and will be useful for analysis. For completeness, proofs that are not trivial are provided in the appendix. None of the proofs assume any specific geometric structure beyond the non-negativity of the disagreement score, and most properties hold for arbitrary spaces and are highlighted in bold for clarity.

\subsubsection{Simple Properties}

\begin{proposition} (Geometric Invariance).
In $\R^m$ the agreement weights are invariant to translation, rotation, and scaling.
\end{proposition}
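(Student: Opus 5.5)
The plan is to show that the disagreement scores $\DA_i$ in \eqref{eqn:disagreement_score} are themselves invariant. Since $w_i$ in \eqref{eqn:agreement_weight} depends on the data only through $(\DA_1,\dots,\DA_n)$, the claim then follows. Here the distance $d$ on $\R^m$ is taken to be the Euclidean distance $d(x,y)=\|x-y\|$, or more generally any distance that is invariant under rigid motions and positively homogeneous of some degree $p>0$, i.e. $d(sx,sy)=s^p d(x,y)$; a power of the Euclidean norm is the typical case. I will consider a general similarity transformation $T(x)=sQx+t$ with $s>0$, $Q$ orthogonal, and $t\in\R^m$, and apply it to every entity $o_i$.

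\textbf{Step 1: pairwise distances.} Compute $d(T(o_i),T(o_j))=\|sQ(o_i-o_j)\|=s\|o_i-o_j\|$. The translation cancels in the difference, and $Q$ preserves the Euclidean norm. So every pairwise distance is multiplied by the same factor $s$, or by $s^p$ in the homogeneous case.

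\textbf{Step 2: normalization.} Then $D_i$ becomes $s\,D_i$ and $D_a$ becomes $s\,D_a$, so $\DA_i=D_i/D_a$ is unchanged. This is exactly the scale-invariance remarked on in Section~\ref{sec:comparison}.

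\textbf{Step 3: weights.} The weights $w_i$ are a fixed function of the unchanged scores, so they are unchanged as well. This holds for any fixed kernel $\kappa$, including $e^{-x/\sigma}$ and $e^{-x^2/\sigma^2}$ with fixed $\sigma$. For the practical kernels $\kappa_l$ and $\kappa_g$, the quantities $\DA_{\mbox{sm}}$, $\DA_i^*$, MAD, $\bmad$, $\sigmamad$, $\bmed$ and $\sigmamed$ are all functions of the $\DA$-values alone, so they are also unchanged.

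The only real subtlety, and thus the main obstacle, is the degenerate case $D_a=0$, where all entities coincide and $\DA_i$ is undefined. Since $s>0$, this case is preserved by $T$, so it can be excluded by assumption or handled by any consistent convention such as uniform weights. A second caveat is that the invariance is tied to the choice of $d$: it requires rotation and translation invariance together with homogeneity. I would state this assumption explicitly.
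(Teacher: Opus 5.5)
Your proposal is correct and follows the argument the paper relies on. The paper lists this among the ``simple properties'' and gives no written proof; the only supporting reasoning is the remark in Section~\ref{sec:comparison} that the common distance factor cancels in $\DA_i = D_i/D_a$, which is your Steps~1--2. Your explicit homogeneity and rigid-invariance assumption on $d$ (covering both $\ell_2$ and $\ell_2^2$), the observation that $\bmad$, $\sigmamad$, $\bmed$, $\sigmamed$ depend only on the $\DA$-values, and the degenerate case $D_a=0$ make precise what the paper leaves implicit.
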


\begin{proposition} ({\bf Agreement Uniformity}).
If all objects are in perfect mutual agreement, meaning all pairwise disagreement measures are equal, then SNAP assigns uniform agreement weights:
$\forall i \not= j, \ d(o_i,o_j)=c \ \Longrightarrow \ w_i = \frac{1}{n}$.
\end{proposition}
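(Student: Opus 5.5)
The plan is to compute the disagreement scores explicitly under the hypothesis and observe that they are all equal; the weights then follow immediately. Assume $d(o_i,o_j)=c$ for all $i\neq j$, and recall that $d(o_i,o_i)=0$ by the standing requirement on $d$. Then every row sum in \eqref{eqn:disagreement_score} is
\[
D_i=\sum_{j\neq i} d(o_i,o_j)=(n-1)c .
\]
Consequently $D_a=\sum_{i=1}^n D_i = n(n-1)c$. Provided $c>0$ (and $n\ge 2$), this gives $\DA_i = (n-1)c/(n(n-1)c) = 1/n$ for every $i$. In particular the normalized scores are independent of the actual value of $c$, in line with the scale invariance discussed in Section~\ref{sec:comparison}.

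Since all $\DA_i$ coincide, the kernel values $\kappa(\DA_i)$ all coincide as well, whichever decreasing kernel is used. This holds for the Laplacian and Gaussian kernels and for the half versions $\kappa_l,\kappa_g$. For the half versions, note that $\DA_{\mbox{sm}}=1/n$, so $\DA_i^*=0$ for all $i$, and therefore $\kappa_l(\DA_i)=\kappa_g(\DA_i)=e^0=1$. In either case \eqref{eqn:agreement_weight} gives $w_i=\kappa(1/n)/(n\,\kappa(1/n))=1/n$, because $\kappa>0$ and the denominator is nonzero.

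The only real obstacle is the set of degenerate cases; the rest is a one-line computation.
\begin{itemize}
\item If $c=0$, then $D_a=0$ and $\DA_i$ is the indeterminate $0/0$. I would handle this by the natural convention that identical objects get equal scores, for instance $\DA_i:=1/n$, which is the limit $c\to 0^+$. The weights are then uniform by the same argument.
\item For the data-driven kernels, $\DA^*\equiv 0$ forces $\mbox{MAD}=0$ and $\mbox{med}(\DA^*)=0$, so $b$ and $\sigma$ vanish. The exponents then become $0/0$. I would again argue by the convention that $\kappa(\DA_{\mbox{sm}})=1$ (the kernel value at the set median is $1$ for every scale parameter), or simply note that any rule assigning a common value to equal arguments yields $w_i=1/n$.
\end{itemize}
I would state these conventions explicitly so that the proposition holds without exception.
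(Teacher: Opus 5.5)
Your proof is correct and is the direct computation the paper intends; the paper gives no proof here because it regards this as trivial. Under the hypothesis every $D_i=(n-1)c$, so $\DA_i=1/n$ for all $i$, the kernel values coincide, and $w_i=1/n$. Your treatment of the degenerate cases $c=0$ (where $D_a=0$) and of the vanishing data-driven scales $\bmad,\bmed,\sigmamad,\sigmamed$ goes beyond what the paper addresses, and the conventions you propose for them are sensible.
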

This conceptually perfectly fits the intuition. Note that this agreement uniformity is necessary but not sufficient.

\begin{proposition}[{\bf Disagreement Monotonicity}]
If $D_i \le D_k$, then $w_i \ge w_k$.
\end{proposition}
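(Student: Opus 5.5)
The argument is a short composition of monotone maps. I trace $D_i \le D_k$ through the definitions \eqref{eqn:disagreement_score} and \eqref{eqn:agreement_weight}.

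\emph{Step 1.} Pass from $D$ to $\DA$. By definition $\DA_i = D_i/D_a$, and the normalizer $D_a=\sum_{j} D_j$ is the same for every index. Since $d\ge 0$, we have $D_a \ge 0$. If $D_a>0$, dividing by $D_a$ preserves order, so $\DA_i \le \DA_k$. The degenerate case $D_a=0$ means all pairwise distances vanish. There $\DA$ is not defined by \eqref{eqn:disagreement_score}, and I would adopt the convention $\DA_j=1/n$ for all $j$ (the limit consistent with Agreement Uniformity). This gives $w_i=w_k=1/n$, which trivially satisfies the claim.

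\emph{Step 2.} Apply the kernel. $\kappa$ is assumed non-increasing. So $\DA_i\le\DA_k$ implies $\kappa(\DA_i)\ge\kappa(\DA_k)$. This holds for the full Laplacian and Gaussian kernels, since $x\mapsto x/\sigma$ and $x\mapsto x^2/\sigma^2$ are nondecreasing on $[0,\infty)$ where $\DA\ge0$. It also holds for the half kernels $\kappa_l,\kappa_g$. These are evaluated at $\DA-\DA_{\mbox{sm}}\ge 0$ with the same shift $\DA_{\mbox{sm}}$ for all indices, and the bandwidths $b$ or $\sigma$ are common scalars computed once from the data. The one point to check is that the data-dependent estimates $\bmad,\sigmamad,\bmed,\sigmamed$ are shared by all $j$. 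They are, so within a fixed dataset $\kappa$ is a single fixed decreasing function. If an estimate happened to be $0$, I would note that the kernel must be interpreted as the pointwise limit. That limit is still non-increasing, so the argument is unaffected.

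\emph{Step 3.} Normalize. The normalizer $Z=\sum_j\kappa(\DA_j)$ is common to all weights, and it is strictly positive because exponential kernels are positive. Hence $w_i=\kappa(\DA_i)/Z\ge\kappa(\DA_k)/Z=w_k$.

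There is no real obstacle. The only care needed is in the degenerate cases ($D_a=0$, or a zero robust scale estimate) and in making explicit that every quantity other than $\DA_i$ and $\DA_k$ themselves is index-independent.
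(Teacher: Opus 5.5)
Your proof is correct and is exactly the argument the paper intends: the paper omits a proof for this property, treating it as trivial, and the implicit reasoning is the same chain $D_i\le D_k \Rightarrow \DA_i\le\DA_k$ (common normalizer $D_a$) $\Rightarrow \kappa(\DA_i)\ge\kappa(\DA_k)$ ($\kappa$ decreasing) $\Rightarrow w_i\ge w_k$ (common denominator). Your handling of the degenerate cases goes beyond the paper, with one small fix: under your zero-bandwidth limit convention the kernel is no longer strictly positive, so $\sum_j\kappa(\DA_j)>0$ should be justified by the set median, whose half-kernel value is $\kappa(\DA_{\mbox{sm}})=1$.
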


\begin{proposition}[{\bf Maximal Agreement Weight}]
\label{prop:set_median}
The set median (object with the smallest disagreement score) has the largest agreement weight:
$\arg\min_i D_i \ = \ \arg\max_i w_i$.
\end{proposition}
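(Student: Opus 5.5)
The plan is to derive the statement directly from Disagreement Monotonicity, using the structure of the weight formula \eqref{eqn:agreement_weight}. The normalizing denominator $\sum_{j=1}^n \kappa(\DA_j)$ is a single positive constant shared by all $i$. Hence ordering the $w_i$ is the same as ordering the values $\kappa(\DA_i)$. Next, $\DA_i = D_i/D_a$ with $D_a>0$ a common factor, assuming not all pairwise distances vanish. If $D_a=0$, then every $D_i=0$ and every weight equals $1/n$ by Agreement Uniformity, so the claim holds trivially. Therefore $\DA_i$ is a strictly increasing function of $D_i$. Composing with the decreasing kernel $\kappa$ shows that $w_i$ is a non-increasing function of $D_i$.

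With this in hand, I take $i^\ast\in\arg\min_i D_i$, the set median. Then $D_{i^\ast}\le D_k$ for every $k$, and Disagreement Monotonicity gives $w_{i^\ast}\ge w_k$ for all $k$. So $i^\ast\in\arg\max_i w_i$, which is the inclusion $\arg\min_i D_i\subseteq\arg\max_i w_i$.

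For the reverse inclusion, and hence equality of the argmin and argmax sets, I would require $\kappa$ to be \emph{strictly} decreasing on the relevant range. This holds for the Laplacian and Gaussian kernels on $[0,\infty)$, and for the half kernels $\kappa_l,\kappa_g$ on $\DA\ge\DA_{\mbox{sm}}$, provided $b,\sigma>0$. Suppose $k\notin\arg\min_i D_i$, so that $D_k>D_{i^\ast}$. Then $\DA_k>\DA_{i^\ast}$, strict decrease gives $\kappa(\DA_k)<\kappa(\DA_{i^\ast})$, and hence $w_k<w_{i^\ast}$, so $k\notin\arg\max_i w_i$.

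The main obstacle is degenerate cases rather than any computation. The first is ties in $D_i$, where the statement must be read as equality of index sets. The second is the possibility of a non-strict kernel, or of a robust scale estimate $\bmad$ or $\sigmamad$ equal to zero, for example when MAD vanishes. Such cases could make $\kappa$ constant or undefined and break strictness. I would state the strict-decrease assumption explicitly and remark that without it only the inclusion $\arg\min_i D_i\subseteq\arg\max_i w_i$ survives.
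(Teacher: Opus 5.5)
Your proof is correct and follows the paper's intended route. The paper gives no written proof, treating the claim as an immediate consequence of Disagreement Monotonicity under its simplifying assumption of a unique set median, and your strict-decrease argument for the reverse inclusion makes the set equality precise. One small caveat: if $D_a=0$, the scores $\DA_i$ are $0/0$ and the weights are undefined rather than uniform, so exclude that case instead of appealing to Agreement Uniformity, which implicitly assumes $c>0$.
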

Here for simplicity we assume there is a single set median. It is natural to assign this most representative object from the input set the largest agreement weight.

\begin{proposition}(Non-Monotonicity under Motion).
\label{prop:non-monotonicity}
Fix all points except $\vector{x}_k$ and define a continuous path:
$\vector{x}_k(t) = (1-t)\vector{x}_k + t \vector{x}_i, \quad t\in[0,1]$,
from $\vector{x}_k$ to some other $\vector{x}_i$. The agreement weight of $\mathbf{x}_k(t)$ changes non-monotonically on the path from $\vector{x}_k$ to $\vector{x}_i$.
\end{proposition}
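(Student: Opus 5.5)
The claim is existential: \emph{in general} (for suitable configurations) the weight $w_k(t)$ of the moving point is not monotone along the segment. So the plan is to exhibit an explicit configuration in $\R^m$ (already $\R^1$ suffices) where $w_k(t)$ first increases and later decreases, or vice versa. Nothing has to be shown for every configuration.

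\textbf{Reduction.} First reduce the weight to something tractable.
\begin{itemize}
\item By Disagreement Monotonicity, $w_k$ is essentially governed by $D_k(t)$ relative to the other $D_j(t)$. However, both $D_a$ and the normalizing sum $\sum_j \kappa(\DA_j)$ change with $t$.
\item I would fix a concrete kernel, say the Laplacian $\kappa(x)=e^{-x/\sigma}$, and write
\[
w_k(t)=\Bigl(\sum_j e^{-(\DA_j(t)-\DA_k(t))/\sigma}\Bigr)^{-1}.
\]
Then $w_k$ increases exactly when every gap $\DA_j(t)-\DA_k(t)$ grows in an aggregate sense.
\item A simpler certificate is to compare endpoint and interior values. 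It suffices to find $0<t_1<t_2<1$ with $w_k(0)<w_k(t_1)$ and $w_k(t_1)>w_k(t_2)$, or the mirror pattern.
\end{itemize}

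\textbf{Construction.} Use points on the real line with $d(x,y)=|x-y|$.
\begin{itemize}
\item Place a tight inlier cluster near $0$.
\item Put the target $\vector{x}_i$ far away, e.g.\ at $L\gg 1$, so that $\vector{x}_i$ is itself an outlier.
\item Start $\vector{x}_k$ on the opposite side at $-L$.
\end{itemize}
Along the path from $-L$ to $L$, the moving point passes through the cluster.
\begin{itemize}
\item Near $t=1/2$, $D_k(t)$ is minimal, $\vector{x}_k(t)$ becomes the set median, and by Maximal Agreement Weight its weight is the largest.
\item At $t=0$ and $t=1$, $D_k$ is large, so $\vector{x}_k$ has near-minimal weight. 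At $t=1$ it coincides with $\vector{x}_i$, and both share the outlier weight.
\end{itemize}
Hence $w_k$ rises and then falls, which is non-monotone.

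\textbf{Checking the construction.} I would verify this with explicit numbers, e.g.
\begin{itemize}
\item $n=4$ with cluster points $\{-\epsilon,\epsilon\}$, $\vector{x}_i=L$, $\vector{x}_k(0)=-L$;
\item compute $D_j$ at $t=0,\tfrac12,1$ in closed form.
\end{itemize}
Since the $\DA_j$ are scale invariant (Geometric Invariance), one can set $L=1$ and let $\epsilon\to 0$, which makes the arithmetic trivial.

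\textbf{Main obstacle.} The subtle point is that $D_a(t)$ and the other entities' scores also change with $t$. I must therefore confirm that $w_k(1/2)$ strictly exceeds both endpoint weights, rather than relying only on the ordering of $D_k$. With Disagreement Monotonicity, $w_k(1/2)\ge w_j(1/2)$ for all $j$, so $w_k(1/2)\ge 1/n$. At the endpoints, $D_k$ is strictly larger than the cluster points' $D_j$, so $w_k<1/n$ strictly there. Concretely, at $t=0$ the moving point is the unique maximizer of $D$, so its weight is strictly below the average. At $t=1$ it ties with $\vector{x}_i$ as the two maximizers, again strictly below the cluster weights and hence below $1/n$.

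This sandwich via $1/n$ avoids any delicate dependence on $\sigma$ or on the choice of $\kappa_l$ versus $\kappa_g$. It therefore proves the proposition for any strictly decreasing kernel.
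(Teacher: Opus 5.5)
There is a concrete error at the decisive step ($t=1$) of your $n=4$ instance. Your overall idea is sound: an explicit one-dimensional configuration plus the $1/n$ sandwich from Disagreement Monotonicity. Once repaired, it gives more than the paper does. The paper only exhibits three points $(0,1),(0,0),(1,1)$ in $\R^2$ under $\ell_2^2$ and reads the non-monotonicity off a plotted weight curve (from about $0.32$ to $0.36$), with no kernel-independent argument. But the step you rely on for the decrease is false in your own example.

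With cluster $\{-\epsilon,\epsilon\}$ and $\vector{x}_i=L$, at $t=1$ we have $\vector{x}_k(1)=\vector{x}_i$, so $d(\vector{x}_k(1),\vector{x}_i)=0$ and
\[
D_k(1)=D_i(1)=(L+\epsilon)+(L-\epsilon)=2L,\quad D_{\epsilon}(1)=2\epsilon+2(L-\epsilon)=2L,\quad D_{-\epsilon}(1)=2L+4\epsilon .
\]
So $\vector{x}_k(1)$ is a tied \emph{minimizer} of $D$, not a maximizer, and Disagreement Monotonicity gives $w_k(1)>1/4$. That is the opposite of your claim. The sandwich then certifies only $w_k(0)<1/4<w_k(1/2)$, i.e. an increase, and says nothing about a later decrease. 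What you overlooked is that the two coincident points support each other. With a two-point cluster they form a competing cluster of the same size, which is exactly what ARH excludes. A minor slip: at $t=0$ the moving point is not the unique maximizer, since by symmetry $D_k(0)=D_i(0)=4L$. This is harmless, because the cluster scores $2L+2\epsilon$ are strictly smaller.

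There are two easy repairs.

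\textbf{Larger cluster.} Take $m\ge 3$ cluster points, say all at $0$. Then $D_k(0)=D_i(0)=(m+2)L>2L=D_c(0)$, $D_k(1/2)=D_c(1/2)=L<(m+1)L=D_i(1/2)$, and $D_k(1)=D_i(1)=mL>2L=D_c(1)$. Your sandwich now goes through verbatim and gives $w_k(0),w_k(1)<1/n<w_k(1/2)$ for every strictly decreasing kernel.

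\textbf{Direct comparison.} Alternatively keep $n=4$, set $\epsilon=0$, and compare values directly. At $t=1$ all four scores equal $2L$, so $w_k(1)=1/4$ exactly. At $t=1/2$ you get $\Delta(1/2)=(1/6,1/6,1/6,1/2)$, hence $w_k(1/2)=\kappa(1/6)/(3\kappa(1/6)+\kappa(1/2))>1/4$.

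Either version proves the existential statement independently of the kernel, which the paper's numerical illustration does not.
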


We give a simple example by considering three 2D points: $\vector{x}_1 = (0,1)$, $\vector{x}_2 = (0,0)$, $\vector{x}_3 = (1,1)$ and the path from $\vector{x}_2$ to $\vector{x}_3$.
Figure \ref{fig:non-monotony} shows the agreement weights for $\vector{x}_1$, $\vector{x}_2(t)$, $\vector{x}_3$, where the squared Euclidean distance $\ell_2^2$: $d(\vector{x}, \vector{y})=||\vector{x}-\vector{y}||^2$ is used. When moving $\vector{x}_2$ toward $\vector{x}_3$ its agreement weight changes smoothly, but non-monotonically from $w(\vector{x}_2(0))\!\approx\!0.32$ to $w(\vector{x}_2(1))\!\approx\!0.36$. When $\vector{x}_2$ moves, the other two points change their agreement weights accordingly.

\begin{figure}[t]
    \centering
    \includegraphics[width=0.6\linewidth]{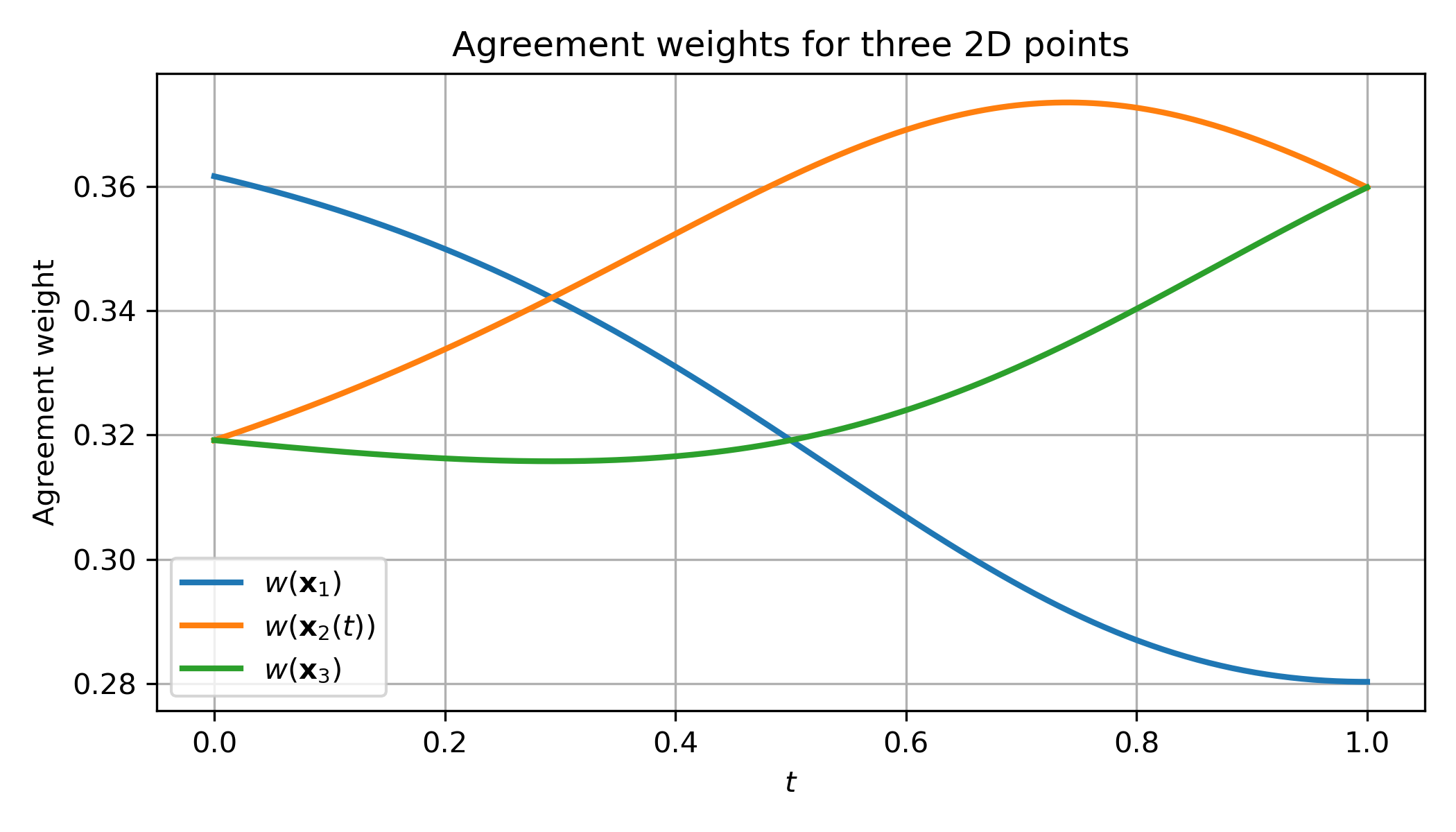}
    \vspace{-2mm}
    \caption{Non-monotonic evolution of agreement weights under continuous point motion.}
    \label{fig:non-monotony}
\end{figure}

While monotonicity w.r.t. distance may be intuitively expected, its absence is desirable here. The non-monotonicity stems from global coupling: moving a point closer to one point may move it farther from many others, and agreement weights depend on the entire configuration rather than pairwise distances alone. As a result, $w(\vector{x}_k(t))$ varies smoothly but not necessarily monotonically. This is desirable, since agreement weighting reflects global agreement rather than local attraction; a point may become less influential even as its distance decreases, capturing changes in the overall group structure and distinguishing our approach from local kernel or neighborhood-based methods.

\subsubsection{Local Sensitivity Analysis}

\begin{proposition}[restate=propSensitivity, name=Local Sensitivity Analysis]
\label{prop:sensitivity}
Let $K = \sum_{l=1}^n \kappa(\DA_l)$. Then $w_i = \frac{\kappa(\DA_i)}{K}$.
The Jacobian $J = [ \partial w_i / \partial \DA_j ]$ is given by:
\begin{equation} \label{eqn:jacobian}
    J_{ij} = \frac{\partial w_i}{\partial \DA_j} =
    \begin{cases}
        \dfrac{\kappa'(\DA_i)(K - \kappa(\DA_i))}{K^2}, & i=j \\[1em]
        -\dfrac{\kappa(\DA_i) \kappa'(\DA_j)}{K^2}, & i \neq j
    \end{cases}
\end{equation}
For any decreasing $\kappa(x)$, the diagonal elements are negative, while the off-diagonal elements are positive.
\end{proposition}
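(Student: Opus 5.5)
The plan is to treat the disagreement scores $\DA_1,\dots,\DA_n$ as independent variables and differentiate the quotient $w_i = \kappa(\DA_i)/K$, where $K=\sum_{l}\kappa(\DA_l)$. This is the natural reading of the Jacobian $[\partial w_i/\partial \DA_j]$. We deliberately ignore the coupling through the common normalizer $D_a$ in \eqref{eqn:disagreement_score}, since the statement is about sensitivity with respect to the scores themselves. We assume $\kappa$ is differentiable, as the Laplacian and Gaussian kernels are. The only auxiliary fact needed is $\partial K/\partial \DA_j = \kappa'(\DA_j)$, because $\DA_j$ enters $K$ only through the $j$-th summand.

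First, the diagonal case $i=j$. By the quotient rule,
\[
\frac{\partial w_i}{\partial \DA_i} = \frac{\kappa'(\DA_i)K - \kappa(\DA_i)\kappa'(\DA_i)}{K^2} = \frac{\kappa'(\DA_i)\bigl(K-\kappa(\DA_i)\bigr)}{K^2}.
\]
Second, the off-diagonal case $i\neq j$. Here the numerator $\kappa(\DA_i)$ does not depend on $\DA_j$, so only the denominator contributes, giving $-\kappa(\DA_i)\kappa'(\DA_j)/K^2$. Together these reproduce \eqref{eqn:jacobian}.

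For the signs, use that $\kappa$ is decreasing, so $\kappa'\le 0$, with strict inequality on the relevant range for the exponential kernels. Use also that $\kappa>0$ for the kernels considered, which gives $K-\kappa(\DA_i)=\sum_{l\neq i}\kappa(\DA_l)>0$ whenever $n\ge 2$. The diagonal entry is then a nonpositive factor times a positive factor over $K^2>0$, hence negative. The off-diagonal entry is minus a product of a positive and a nonpositive factor over $K^2$, hence positive.

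The only delicate point is strictness, since it requires $\kappa'<0$ and $\kappa>0$. I would handle it by stating explicitly that $\kappa$ is strictly decreasing and positive, which holds for $e^{-x/\sigma}$ and $e^{-x^2/\sigma^2}$ on $x>0$. In the degenerate case $\kappa'=0$ the inequalities become non-strict. For the half kernels $\kappa_l,\kappa_g$ the same argument applies on $\DA\ge\DA_{\mbox{sm}}$. For $\kappa_g$ one must note that $\kappa_g'(\DA_{\mbox{sm}})=0$, so the set median's own column is zero. I would remark on this rather than prove strict signs there.
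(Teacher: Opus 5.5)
Your proposal is correct and matches the paper's proof: treat the $\DA_j$ as independent variables, apply the quotient rule with $\partial K/\partial \DA_j = \kappa'(\DA_j)$, and read off the signs from $\kappa'<0$ and $K-\kappa(\DA_i)>0$. Your extra care about strictness is a valid refinement that the paper glosses over by asserting $\kappa'(\DA_i)<0$ for any decreasing kernel. That care consists of requiring $\kappa>0$, $n\ge 2$, and $\kappa'<0$ rather than merely $\le 0$, and of noting that for $\kappa_g$ the set median's column of $J$ is zero.
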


If $\kappa(\DA) = e^{-c \DA}$ ($c$ constant), then
$w_i = e^{-c\DA_i}/\sum_j e^{-c\DA_j}$.
In this case, the Jacobian becomes:
\begin{equation}
    \frac{\partial w_i}{\partial \DA_j} =
    \begin{cases}
        -w_i(1 - w_i), & i = j \\
        w_i w_j, & i \neq j
    \end{cases}
\end{equation}
See Appendix \ref{app:sensitivity} for proof.
The agreement weight redistribution is smooth and globally coupled. The Jabobian matrix encodes the mutual dependence of weights. The negative diagonal elements indicate that increasing one $\DA_i$ reduces $w_i$ (and slightly increases other weights). On the other hand, the positive off-diagonal elements mean that increasing $\DA_j$ of another point $\vector{x}_j$ increases $w_i$. If $\vector{x}_j$ becomes more isolated, $\vector{x}_i$’s relative weight increases. These facts confirm our intuition (expectation) on the agreement weights.

The result of sensibility analysis enables to prove additional properties. We show one such example.

\begin{proposition}[restate=propDisagreementSensitivity, name={\bf Strict Disagreement Sensitivity}]
\label{prop:disagreement_sensitivity}
Fix all objects except $o_k$, and let $o_k'$ be a perturbed version of $o_k$ such that
$d(o_k', o_j) > d(o_k, o_j) \ \text{for all } j\neq k
$.
Then, $w_k' < w_k$.
\end{proposition}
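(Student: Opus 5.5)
The plan is to follow the perturbation along a continuous path and apply the Jacobian of Proposition~\ref{prop:sensitivity} via the chain rule. Let $\delta_j = d(o_k',o_j) - d(o_k,o_j) > 0$ for $j \neq k$, and let $\delta = \sum_{j\neq k}\delta_j$. Define a path $o_k(t)$ whose distances interpolate linearly, $d(o_k(t),o_j) = d(o_k,o_j) + t\delta_j$ for $t\in[0,1]$. It suffices to show $\frac{d}{dt} w_k(t) < 0$ on the whole path.

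I would assume $d$ is symmetric, since otherwise the hypothesis says nothing about $d(o_j,o_k)$ and the statement needs a separate treatment. Under this assumption only the following quantities move:
\[
D_k(t) = D_k + t\delta, \qquad D_j(t) = D_j + t\delta_j \ (j\neq k), \qquad D_a(t) = D_a + 2t\delta .
\]
The first key step is that $\DA_k$ strictly increases. Differentiating the quotient gives
\[
\dot\DA_k = \frac{\delta\,(D_a - 2D_k)}{D_a^2}.
\]
Moreover $D_a - 2D_k = \sum_{j\neq k} D_j - D_k$, which is nonnegative because $D_j \ge d(o_j,o_k)$. It is strictly positive as soon as $n \ge 3$ and not all the other objects coincide, and I would assume this nondegeneracy.

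The second step is to expand the derivative with the chain rule, $\dot w_k = \sum_j J_{kj}\,\dot\DA_j$. I would work out the exponential kernel $\kappa(x)=e^{-cx}$ first, where the Jacobian from Proposition~\ref{prop:sensitivity} makes this especially clean:
\[
\frac{\dot w_k}{w_k} = -c\Big((1-w_k)\dot\DA_k - \sum_{j\neq k} w_j \dot\DA_j\Big).
\]
Because $\sum_j \DA_j = 1$, the derivatives satisfy $\sum_{j\neq k}\dot\DA_j = -\dot\DA_k < 0$. Each $\dot\DA_j = (\delta_j D_a - 2\delta D_j)/D_a^2$ has an indefinite sign, however. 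The required inequality is therefore
\[
\sum_{j\neq k} w_j\,\dot\DA_j < (1-w_k)\,\dot\DA_k .
\]
I would try to obtain it by bounding each $w_j$ by $1-w_k$ on the terms with $\dot\DA_j>0$. This needs the positive part of $\sum_{j\neq k}\dot\DA_j$ to be controlled by $\dot\DA_k$ plus the negative part. To get that control I would use $\delta_j \le \delta$ together with $D_j \ge d(o_j,o_k)$.

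For a general decreasing $\kappa$, I would use instead the representation $w_k = \big(1 + \sum_{j\neq k}\kappa(\DA_j)/\kappa(\DA_k)\big)^{-1}$. It then suffices that each gap $\DA_k - \DA_j$ grows along the path. Its derivative has numerator
\[
(\delta-\delta_j)D_a - 2\delta(D_k - D_j) = \Big(\sum_{l\neq j,k}\delta_l\Big)D_a - 2\delta(D_k - D_j).
\]
This is clearly positive whenever $D_k \le D_j$, since the first term is positive. So the worry is only about objects $j$ that currently agree better than $o_k$.

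The main obstacle is exactly this sign control of the $\dot\DA_j$, because normalising by $D_a$ couples every score to the perturbation. I expect a fully general statement to require either the nondegeneracy assumption above or a restriction on the kernel. The first thing I would try is to show that the gaps $\DA_k - \DA_j$ increase for every $j$ with $D_j < D_k$. If that fails for arbitrary configurations, I would fall back to the exponential-kernel computation, where averaging against the weights $w_j$ gives extra slack.
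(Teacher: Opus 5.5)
The step you single out as the main obstacle cannot be completed, because it is false, and so is the proposition itself. Take three points in $\R^2$ with Euclidean $d$: $d(o_1,o_2)=1$, $d(o_k,o_1)=2$, $d(o_k,o_2)=2.5$. Move $o_k$ so that $d(o_k',o_1)=2.1$ and $d(o_k',o_2)=2.501$. Both triangles are non-degenerate, so this is realizable, and both distances strictly increase.

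Then $(D_k,D_1,D_2;D_a)$ changes from $(4.5,3,3.5;11)$ to $(4.601,3.1,3.501;11.202)$. The object $o_1$ remains the set median, but the gap $\DA_k-\DA_1$ drops from $1.5/11\approx 0.1364$ to $1.501/11.202\approx 0.1340$. This is exactly what your numerator predicts: $(\delta-\delta_1)D_a-2\delta(D_k-D_1)=0.011-0.303<0$.

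With the paper's kernel $\kappa(x)=e^{-x/\sigma}$ and $\sigma=0.02$, this gives $w_k=(1+e^{6.818}+e^{4.545})^{-1}\approx 9.9\cdot 10^{-4}$ but $w_k'=(1+e^{6.700}+e^{4.910})^{-1}\approx 1.05\cdot 10^{-3}$. So $w_k'>w_k$. The Gaussian kernel $e^{-x^2/\sigma^2}$ fails in the same way for small $\sigma$, since $\DA_k^2-\DA_1^2$ also decreases.

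In your chain-rule form, at $t=0$ you have $w_1\approx 0.91$, and $\dot\DA_1\approx 4.1\cdot 10^{-3}$ exceeds $\dot\DA_k\approx 1.7\cdot 10^{-3}$. Hence $\sum_{j\ne k}w_j\dot\DA_j\approx 3.2\cdot 10^{-3}>(1-w_k)\dot\DA_k$. Your nondegeneracy assumption holds here, and $o_k$ even has the largest disagreement, so neither helps. Whenever the kernel is sharp and the perturbation pushes $o_k$ away from its closest competitor much more than from the rest, that competitor's score rises faster than $\DA_k$, and $o_k$ gains relative weight.

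The paper's proof consists only of your first step. It writes $\DA_k=D_k/(2D_k+\sum_{i,j\ne k}d(o_i,o_j))$, deduces $\DA_k'>\DA_k$, and then appeals to the negative diagonal entry of the Jacobian in Proposition~\ref{prop:sensitivity}. In doing so it silently discards the off-diagonal contributions $J_{kj}\dot\DA_j$, which you correctly identified as sign-indefinite. So your diagnosis is right, and the paper's argument has exactly this hole. What is needed is an additional hypothesis, not a cleverer estimate.

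Your reduction for general $\kappa$ is also too quick. Growth of $\DA_k-\DA_j$ implies growth of $\kappa(\DA_j)/\kappa(\DA_k)$ only when that ratio depends on the gap alone, i.e. for the exponential kernel.

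One version that does go through: assume $d$ is a metric, $n\ge 3$, the objects other than $o_k$ do not all coincide, and all increments $\delta_j$ are equal. Then the exact finite-difference criterion $(\delta-\delta_j)D_a>2\delta(D_k-D_j)$ holds for every $j$. To see this, use $D_a=2D_k+\sum_{i,j\ne k}d(o_i,o_j)$ and $D_k\le (n-1)D_j$, the latter from the triangle inequality. So every gap grows, and $w_k'<w_k$ for $\kappa(x)=e^{-x/\sigma}$.
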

See Appendix \ref{app:disagreement_sensitivity} for proof.
It indicates that if a single object becomes strictly more distant from all other objects, then its agreement weight strictly decreases.

\subsubsection{Global Stability Analysis}

Due to its simplex structure the agreement weight is intrinsically bounded and has thus global Lipschitz stability property (see Appendix \ref{app:global_bound} for proof).
\begin{proposition}[restate=propGlobalBound, name={\bf Global Bounded Variation}]
\label{prop:global_bound}
The agreement weights $\vector{w} = (w_1, \dots, w_n)$ change to $\vector{w}^\prime$ when moving each point $\vector{x}_k$ to $\vector{x}_k^\prime, \ k=1,\dots,n$. Then
$||\vector{w}^\prime-\vector{w}|| \leq \sqrt{2}$ holds,
independent of the ambient dimension and the magnitude of the input perturbation.
\end{proposition}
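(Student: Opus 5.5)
The bound follows from the simplex structure alone, so the plan does not need the Jacobian of Proposition~\ref{prop:sensitivity} or any property of $\kappa$ beyond positivity. By \eqref{eqn:agreement_weight}, both $\vector{w}$ and $\vector{w}^\prime$ satisfy $w_i \ge 0$ and $\sum_i w_i = 1$. This holds whatever the configurations $\{\vector{x}_k\}$ and $\{\vector{x}_k^\prime\}$ are, since $\kappa > 0$ makes the normalizer positive. Hence both vectors lie in the probability simplex $\Delta^{n-1} = \{\vector{v}\in\R^n : v_i\ge 0,\ \sum_i v_i = 1\}$. The claim then reduces to the purely geometric statement that the Euclidean diameter of $\Delta^{n-1}$ is at most $\sqrt{2}$.

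To bound the diameter, expand
\[
\|\vector{w}^\prime-\vector{w}\|^2 = \|\vector{w}^\prime\|^2 + \|\vector{w}\|^2 - 2\langle \vector{w}^\prime,\vector{w}\rangle .
\]
On the simplex, $\|\vector{v}\|^2 = \sum_i v_i^2 \le \left(\max_i v_i\right)\sum_i v_i \le 1$. The cross term is nonnegative because all entries are nonnegative. Therefore $\|\vector{w}^\prime-\vector{w}\|^2 \le 2$, which gives the bound $\sqrt{2}$.

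An equivalent route is convexity. The map $(\vector{u},\vector{v})\mapsto\|\vector{u}-\vector{v}\|^2$ is convex, so its maximum over the compact convex set $\Delta^{n-1}\times\Delta^{n-1}$ is attained at vertex pairs $(\vector{e}_i,\vector{e}_j)$. Those pairs give either $0$ or $\sqrt{2}$. I would include one of these two arguments, and the inner-product one is shorter.

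Finally, I will note that nothing in the argument depended on the ambient dimension $m$, the distance $d$, the kernel $\kappa$, or the size of the perturbations $\vector{x}_k \to \vector{x}_k^\prime$. This gives the stated independence.

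No step is a real obstacle. The only care needed is to make explicit that the normalization in \eqref{eqn:agreement_weight} is well defined, i.e.\ $\sum_j \kappa(\DA_j) > 0$. This holds because $\kappa$ is exponential, hence strictly positive. I will also remark that $\DA_i$ is defined whenever $D_a>0$, i.e.\ not all objects coincide; in the degenerate case one can set uniform weights, which still lie in the simplex.
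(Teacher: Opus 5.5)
Your proposal is correct and follows the paper's proof: both place $\vector{w},\vector{w}^\prime$ in the probability simplex, use $\|\vector{w}\|^2,\|\vector{w}^\prime\|^2 \le 1$, and expand $\|\vector{w}-\vector{w}^\prime\|^2$, and you also state explicitly the nonnegativity of $\langle \vector{w},\vector{w}^\prime\rangle$, which the paper uses without saying so. One small slip in your optional convexity remark: the squared distance at vertex pairs is $0$ or $2$, not $0$ or $\sqrt{2}$.
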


\subsection{Exponential Suppression of Outlier Weights}
\label{sec:key_result}

\begin{proposition}[restate=propOutlierAmplification, name={\bf Suppression of Outlier Weights}]
\label{prop:outlier_amplification}
Fix all objects except $o_k$ and let
$d(o_k,o_{i \not= k}) \to\infty$. Then $w_k \to 0$ if $n\to\infty$.
\label{prop:outlier_amplification}
\end{proposition}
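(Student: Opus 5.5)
The plan is to analyse the disagreement scores when $o_k$ is pushed to infinity, and then pass to the kernel. Write $R=\sum_{j\neq k} d(o_k,o_j)$, which tends to $\infty$ by assumption. Let $C=\sum_{i\neq k}\sum_{j\neq i,k} d(o_i,o_j)$ be the part of $D_a$ that only involves the fixed objects; it is a fixed finite constant. We read $d$ as symmetric, as it is for the distances used in the paper. Then
\[
D_k = R,\qquad D_i = d(o_i,o_k)+\sum_{j\neq i,k} d(o_i,o_j)\ (i\neq k),\qquad D_a = 2R + C .
\]
Hence, as $R\to\infty$, $\DA_k = R/(2R+C)\to \tfrac12$.

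The inlier scores need more care, because each $d(o_i,o_k)$ also diverges. Summing over $i\neq k$ gives $\sum_{i\neq k}\DA_i = 1-\DA_k \to \tfrac12$. So the $n-1$ remaining objects share total disagreement mass $\tfrac12$, and on average $\DA_i\approx \frac{1}{2(n-1)}$. To get a per-object statement, I would treat the natural regime where $o_k$ escapes in a common direction. In that regime $d(o_i,o_k)/R\to 1/(n-1)$ for every $i\neq k$, which gives
\[
\DA_i \to \frac{1}{2(n-1)} \quad (i\neq k).
\]
In general, I would only use that $\DA_i\le 1-\DA_k\to\tfrac12$ for each $i$, together with the averaged statement.

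Next I would plug these limits into the weight formula \eqref{eqn:agreement_weight}. Because $\kappa$ is decreasing and positive, we have $\kappa(\DA_i)\ge\kappa(1/2)$ for all $i\neq k$ in the limit. Then
\[
w_k = \frac{\kappa(\DA_k)}{\kappa(\DA_k)+\sum_{i\neq k}\kappa(\DA_i)} \;\le\; \frac{\kappa(1/2)}{\kappa(1/2)+(n-1)\,\kappa(1/2)} = \frac1n ,
\]
up to terms that vanish as $R\to\infty$. The bound uses only that $\DA_i\le \tfrac12=\lim\DA_k$, so it does not need the symmetric-escape assumption. Letting $n\to\infty$ gives $w_k\to 0$. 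In the symmetric regime, the sharper limit $\DA_i\to\frac1{2(n-1)}$ gives the exponential-kernel bound
\[
w_k\approx \frac{e^{-1/(2\sigma)}}{(n-1)\,e^{-1/(2(n-1)\sigma)}} .
\]
This bound shows the additional suppression that Theorem~\ref{thm:key_result} presumably quantifies.

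The main obstacle is making rigorous the claim that $\DA_i\le\DA_k$ eventually holds for every $i\neq k$. In the limit, $\DA_i=\tfrac12$ can only occur if a single other object absorbs all of the remaining disagreement mass. This is possible only for degenerate configurations, for example $n=2$. I would first handle this by the averaging argument: Jensen's inequality on the convex kernel gives $\sum_{i\neq k}\kappa(\DA_i)\ge (n-1)\kappa\bigl(\tfrac{1}{2(n-1)}\bigr)$, and $\kappa$ is convex for the Laplacian kernel and for the Gaussian kernel near $0$. This lower bound on the denominator is enough to conclude $w_k\le \kappa(1/2)/((n-1)\kappa(1/2))\to0$ as $n\to\infty$. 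The order of the limits is the delicate point: first $R\to\infty$ with $n$ fixed, then $n\to\infty$. I would state the proof in that order.
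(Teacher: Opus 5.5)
Your argument is correct. It shares the paper's skeleton ($\DA_k\to\frac12$, then substitute into \eqref{eqn:agreement_weight}), but it handles the non-outlying scores differently.

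\textbf{The paper's route.} The paper's proof is an asymptotic heuristic. It writes $D_k\approx(n-1)d(x_k,x_l)$, $D_i\approx d(x_k,x_i)$ and $D_a\approx 2(n-1)d(x_k,x_l)$. This tacitly assumes that all $d(o_k,o_i)$ diverge at a common rate, which is your symmetric-escape regime. It obtains $\DA_i\approx\frac{1}{2(n-1)}$ and then writes $w_k\approx\kappa(\frac12)/(\kappa(\frac12)+(n-1)\kappa(0))$, which already folds in the $n\to\infty$ limit.

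\textbf{Your route.} You use the exact identity $\DA_k=R/(2R+C)$ and the trivial bound $\DA_i\le 1-\DA_k$. For every finite $R$ these give
\[
w_k\;\le\;\frac{\kappa(\DA_k)}{\kappa(\DA_k)+(n-1)\,\kappa(1-\DA_k)},
\]
so $\limsup_{R\to\infty}w_k\le 1/n$ by continuity and positivity of $\kappa$ at $\frac12$.

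\textbf{Comparison.} Your route needs only symmetry of $d$ (which the paper also uses implicitly) and a positive, decreasing kernel continuous at $\frac12$. It does not need a common growth rate. That rate holds for $\ell_2$ and $\ell_2^2$ in $\R^m$, but not for an arbitrary $d$. Your bound is also uniform over the configuration of the remaining objects. This is what gives the subsequent limit $n\to\infty$ a clear meaning, since those objects cannot stay fixed as $n$ grows. In return, the paper's route gives a more explicit value of $w_k$ in the common-rate regime, which you also recover.

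\textbf{Side remarks to fix.} None of these affects your main line.
\begin{itemize}
\item The obstacle you name is not one. You never need $\DA_i\le\DA_k$, because $\DA_i\le 1-\DA_k$ holds exactly. In any case, for $n\ge 3$ one has $D_k-D_i=\sum_{j\neq i,k}\bigl(d(o_k,o_j)-d(o_i,o_j)\bigr)\to\infty$, so the comparison is immediate.
\item The Jensen step is false for the Gaussian kernel. Since $\frac{d^2}{dx^2}e^{-x^2/\sigma^2}=\bigl(\frac{4x^2}{\sigma^4}-\frac{2}{\sigma^2}\bigr)e^{-x^2/\sigma^2}<0$ for $x<\sigma/\sqrt{2}$, this kernel is concave near $0$, not convex. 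Drop that step; it is not needed.
\item In the symmetric regime, your exponential-kernel expression is still $\Theta(1/n)$, only a constant factor below $1/n$. So it does not illustrate the exponential-in-$n$ suppression of Theorem~\ref{thm:key_result}, and the closing remark linking the two should go.
\end{itemize}
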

\vspace{-2mm}
See Appendix \ref{app:outlier_amplification} for proof. Now we further concretize the ARH. We assume that inliers form a tightly clustered majority and that outliers are sufficiently far from the inlier cluster such that the following holds.

\begin{assumption}[Linear Disagreement Gap]
\label{ass:linear_gap}
Let $\{o_i\}_{i=1}^n \subset (X,d)$ satisfy the ARH with inlier set $\I$, $|\I|=\alpha n$ for some $\alpha>1/2$. We assume that there exists a constant $c_0>0$, depending only on the structural parameters $(\alpha,r,R)$ (r and R defined below) such that for every inlier
$i\in \I$ and every outlier $k \in \Out$:
\[
\sum_{j=1}^n d(o_k,o_j)
\;-\;
\sum_{j=1}^n d(o_i,o_j)
\;\ge\;
c_0\, n
\]
\end{assumption}


The Linear Disagreement Gap captures the fact that outliers participate in a linear number of pairwise distances that are much larger than those among inliers. In contrast, inliers form a tightly clustered majority, so their cumulative disagreement is comparatively small. This imbalance naturally causes SNAP to assign exponentially smaller weights to outliers: each outlier “sticks out” in the population by contributing disproportionately to the total disagreement, while inliers reinforce each other. The exponential or Gaussian kernel then translates this linear gap into a strong suppression of outlier weights.

\begin{theorem}[restate=Suppression, name={\bf Exponential Suppression of Outlier Weights}]
\label{thm:key_result}
There exist constants $c>0$ and $n_0\in \mathbb{N}$
such that for all $n\ge n_0$ and all $k \in \Out$:
\[
w_k \le e^{-c n}, \quad w_k \le e^{-c n^2}
\]
for the kernel function $k(x)=e^{-x/\sigma}$ and $k(x)=e^{-x^2/\sigma^2}$, respectively.
\end{theorem}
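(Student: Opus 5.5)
The plan is to compare each outlier weight with the inlier weights through a ratio of kernel values. The normalizing constant $K=\sum_l \kappa(\DA_l)$ then cancels, and only the gap between $\DA_k$ and the inlier scores remains. Since $\kappa$ is positive, dropping the outlier terms from $K$ gives, for any $k\in\Out$,
\[
w_k \;=\; \frac{\kappa(\DA_k)}{\sum_{l}\kappa(\DA_l)} \;\le\; \frac{\kappa(\DA_k)}{\sum_{i\in\I}\kappa(\DA_i)} \;\le\; \frac{1}{\alpha n}\,\max_{i\in\I}\frac{\kappa(\DA_k)}{\kappa(\DA_i)} .
\]
The factor $1/(\alpha n)$ only helps. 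What remains is a uniform bound on $\kappa(\DA_k)/\kappa(\DA_i)$ over all inlier–outlier pairs, which is where Assumption~\ref{ass:linear_gap} enters.

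For the Laplacian kernel the ratio is $\exp\bigl(-(\DA_k-\DA_i)/\sigma\bigr)$. Using \eqref{eqn:disagreement_score}, $\DA_k-\DA_i=(D_k-D_i)/D_a$. The Linear Disagreement Gap gives $D_k-D_i\ge c_0 n$; it is stated with the full sum over $j$, but $d(o,o)=0$, so this is the same as the gap in $D$.

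For the Gaussian kernel I would factor the difference of squares:
\[
\DA_k^2-\DA_i^2=(\DA_k-\DA_i)(\DA_k+\DA_i)\ge(\DA_k-\DA_i)^2 .
\]
The inequality holds because all scores are nonnegative. The exponent is therefore at least the square of the Laplacian exponent times $\sigma/\sigma^2$. A gap of order $n$ thus yields $e^{-c n^2}$, and the $1/(\alpha n)$ prefactor is absorbed by shrinking $c$ and choosing $n_0$ large.

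The main obstacle is the step from the gap $D_k-D_i\ge c_0 n$ to a gap in the normalized scores of order $n$, with $\sigma$ fixed. The normalization divides by $D_a$. Under the ARH with bounded inlier diameter $r$ and outlier distances at most $R$, $D_a$ is between order $n$ and order $n^2$. Dividing the linear gap by $D_a$ directly therefore gives only $\DA_k-\DA_i\gtrsim c_0/(Rn)$, which is far too weak.

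My first attempt is to read the assumption, together with the scale invariance stressed in Section~\ref{sec:comparison}, as being posed in the units in which $\kappa$ is evaluated. That is, I would interpret the constant $c_0$ of Assumption~\ref{ass:linear_gap} as controlling $n\,(\DA_k-\DA_i)\cdot n$, so that $\DA_k-\DA_i\ge c_0 n$ after normalization. With that reading, I would set $c=c_0/(2\sigma)$ in the Laplacian case and $c=c_0^2/(2\sigma^2)$ in the Gaussian case, using the factor $1/2$ to absorb $\log(\alpha n)$ for $n\ge n_0$.

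If that reading is not admissible, the same chain of inequalities still goes through verbatim. It yields the stated exponential and super-exponential rates exactly when the normalized gap, rather than the raw gap, is linear in $n$. I would record explicitly in the proof that this is the quantity the constant $c_0$ must bound.
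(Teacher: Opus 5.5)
\textbf{There is a genuine gap.} Your skeleton is the paper's: drop the outlier terms from $K$, compare with the worst inlier, and factor $\DA_k^2-\DA_i^2\ge(\DA_k-\DA_i)^2$. You have also located exactly the step that fails. But your repair does not close that gap, and with $\sigma$ fixed nothing can.

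By construction $\DA_l\ge 0$ and $\sum_l\DA_l=1$, so $\DA_k-\DA_i\le 1$ for every $n$. A normalized gap $\DA_k-\DA_i\ge c_0 n$ is therefore unsatisfiable once $n>1/c_0$. Your first reading makes the theorem vacuous, and your fallback (``the rates follow exactly when the normalized gap is linear in $n$'') names a condition that never holds. The rescaling is also off: $n^2(\DA_k-\DA_i)\ge c_0 n$ gives $\DA_k-\DA_i\ge c_0/n$, not $c_0 n$.

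In fact the statement is false as posed. Since $0\le\DA_l\le 1$ and $\kappa$ is decreasing with $\kappa(0)=1$,
\[
w_k\;\ge\;\frac{\kappa(1)}{n}\;=\;\frac{e^{-1/\sigma}}{n}\quad\text{resp.}\quad\frac{e^{-1/\sigma^2}}{n},
\]
which decays only polynomially, so $w_k\le e^{-cn}$ fails for large $n$. The hypothesis is not vacuous either. Put all inliers at one point and set every other pairwise distance to $1$. Then $D_k-D_i=\alpha n-1$, so Assumption~\ref{ass:linear_gap} holds with $c_0=\alpha/2$ for $n\ge 2/\alpha$. Yet all $\DA_l=O(1/n)$, and hence $w_k\asymp 1/n$.

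The paper's own proof has the same hole. It writes $\DA_k-\DA_i\ge c_0n/D_a=n/c_1$ ``since $D_a\le Cn^2$''. That bound only gives $c_0n/D_a\ge c_0/(Cn)$, which is precisely your $O(1/n)$ estimate. Its Gaussian claim $\DA_k^2-\DA_i^2\ge c_2n^2$ likewise contradicts $\DA_k\le 1$.

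So instead of reinterpreting the assumption, change the conclusion. One option is to let the bandwidth shrink with $n$, which is equivalent to evaluating $\kappa$ at $n^2\DA$. If all distances are at most $R$, then $D_a\le Rn^2$ and hence $\DA_k-\DA_i\ge c_0/(Rn)$. Your chain of inequalities with $\sigma=\sigma_0/n^2$ then gives
\[
w_k\le\frac{1}{\alpha n}\exp\Bigl(-\frac{c_0\,n}{R\sigma_0}\Bigr)
\quad\text{and}\quad
w_k\le\frac{1}{\alpha n}\exp\Bigl(-\frac{c_0^2\,n^2}{R^2\sigma_0^2}\Bigr)
\]
for the Laplacian and Gaussian kernels respectively. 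This is a provable form of exponential and super-exponential suppression, and your write-up should state it as the theorem being proved.
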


See Appenddix \ref{app:key_result} for proof. The exponential bound on the weight $w_k$ assigned to any outlier formalizes SNAP's ability to distinguish inliers from outliers. As the sample size $n$ increases, the influence of outliers vanishes at an exponential or super-exponential rate, making their contribution to the final computation negligible. This underscores SNAP as a general, robust weighting framework rather than a domain-specific heuristic.

\subsection{Scalability of Agreement Weight Computation}
\label{sec:scalability}

\begin{figure}
    \centering
    \includegraphics[width=\linewidth]{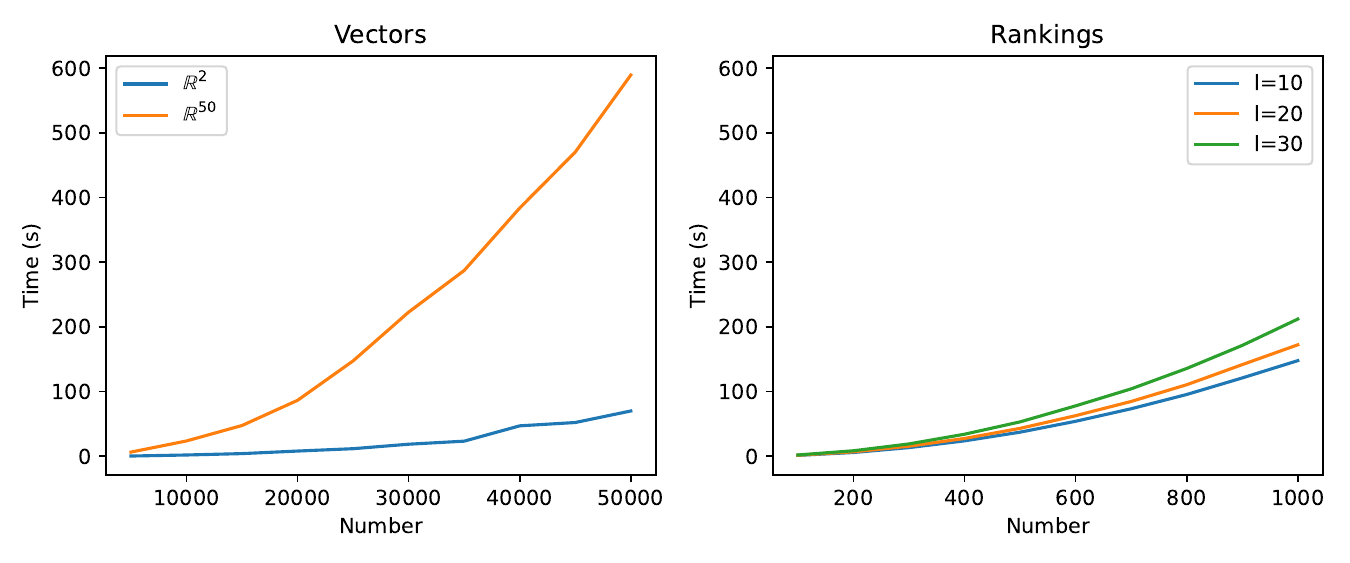}
    \caption{Run-time (in seconds) for agreement weight computation for vectors in $\R^m$ and rankings of length $l$.\label{fig:run-time}}
\end{figure}

Given $n$ entities in $(\mathcal{X},d)$ the computation of agreement weights needs $O(n^2 T)$ time, where $T$ is the overhead for one single distance computation. Figure~\ref{fig:run-time} shows the computation time of SNAP weights in the case of vectors in $\R^m$ using the Euclidean distance (in $\mathcal{O}(m)$ time) and rankings using the Kendall-Tau distance \cite{christensen2005kendalltau} (in $\mathcal{O}(l\log{l})$ time) where $l$ is the number of ranked items. All results were computed in Python 3.13 using a
standard pc with Intel i7-12700F CPU, 4.8GHz with 64GB RAM.
These results show the feasibility of agreement weight computation for a rather large number of objects. Approximate approaches like random subsampling could be used if speedup is required.

\section{SNAP Model for Robust Computation}

Given the set of consensus entities, a computation problem is defined to find the optimal parameters $\theta \in \Theta$. Let $L()$ be the loss function, we consider the Empirical Risk Minimization (ERM) problem:
$\min_{\theta \in \Theta} F(\theta) = \sum_{i=1}^{n} L(o_i,\theta)$.
Our SNAP model is simply weighting each item with the associated agreement weight:
\begin{equation}
\min_{\theta \in \Theta} F(\theta) = \sum_{i=1}^{n} w_i \cdot L(o_i,\theta)
\label{eqn:snap_model}
\end{equation}
This model downweights outliers naturally. It is a task-agnostic, self-consistent influence weighting; further extension to task-dependent weighting $w_i({\theta})$ will be discussed in Section \ref{sec:discussion}. The optimization problem in \eqref{eqn:snap_model} may have a closed-form solution or be solved iteratively. 

Robust estimation methods traditionally aim to limit the influence of outliers by modifying the loss function \cite{Maronna2019}. SNAP adopts a complementary perspective by explicitly controlling the \emph{relative} contribution of samples through adaptive weights. Because parameter updates are determined by relative gradient contributions, explicit relative downweighting helps ensure that inliers dominate optimization even in the presence of outliers, without relying on loss-specific scale tuning. Overall, SNAP shifts the focus from bounding the absolute influence of individual samples to explicitly regulating their relative impact on optimization. This perspective complements existing robust estimation methods.

The key contribution of this paper is the SNAP weighting scheme and its theoretical analysis. A comprehensive study of \eqref{eqn:snap_model} and comparison with state-of-the-art robust estimation methods is beyond the scope of this work and left for future research. Instead, we focus on two concrete instances, aggregation and subspace estimation, to validate the effectiveness of SNAP weighting.

\subsection{SNAP Model for Robust Aggregation}

Given the weights $w_i$, SNAP defines a \emph{weighted consensus} estimate for aggregation. In a general (not necessarily metric) space $(\mathcal{X}, d)$, a general framework of consensus of a set of given objects $o_1, \dots, o_n \in \mathcal{X}$ is defined by:
\begin{equation}
    \bar{o} \ = \ \arg\min_{x \in \mathcal{X}} \sum_{i=1}^n w_i \cdot d(x, o_i)
\label{eqn:GM}
\end{equation}
which is also called generalized median \cite{Nienkotter23}. This framework has been intensively studied for numerous problem domains and found numerous applications in many fields, see Appendix \ref{app:GM} for details.

\subsection{SNAP Model for Robust Subspace Estimation}

Here we consider the linear case for simplicity reasons. Let $\vector{x}_1,\dots,\vector{x}_n \in \R^m$ be data points, and let $\vector{U} \in \R^{m \times k}$ be a matrix with orthonormal columns ($\vector{U}^\top \vector{U} = \vector{I}_k$) defining a $k$-dimensional linear subspace. Then the general subspace estimation problem, extended with the SNAP weights, is defined by:
\begin{equation}
\min_{\vector{U} \in \R^{m \times k},\, \vector{U}^\top \vector{U} = \vector{I}_k} \sum_{i=1}^n w_i \cdot f\big(\|\vector{x}_i - P_{\vector{U}} \vector{x}_i\|\big)
\end{equation}
where $P_\vector{U} \vector{x}_i = \vector{U} \vector{U}^\top \vector{x}_i$
is the orthogonal projection of $\vector{x}_i$ onto the subspace, and $f: R_{\ge 0} \to R_{\ge 0}$ is a loss function that encodes the desired criteria. The special case $f(t)=t^2$ leads to the popular PCA.

\subsection{Analysis of SNAP ERM in \eqref{eqn:snap_model}}

We begin with a high-level analysis of robustness and efficiency in arbitrary spaces, which can then be further deepened for specific spaces.

\noindent
\textbf{Robustness Analysis.}
Intuitively, SNAP assigns influence to each data item proportional to how strongly it agrees with the rest of the data set, resulting in a soft, self-supervised determination of weights. This model downweights outliers naturally.

\begin{theorem}[Deterministic Finite-Sample Bound]
\label{thm:bound_erm}
For any fixed $\theta \in \Theta$, the weighted empirical risk satisfies:
\[
|F(\theta) - F_{\mathcal{I}}(\theta)| = \sum_{i \in \Out} w_i \cdot L(o_i,\theta)
\leq w_{\max}^{\mathrm{out}} \sum_{i \in \Out} L(o_i, \theta)
\]
where $\mathcal{I}$ is the inlier set and $w_{\max}^{\mathrm{out}} = \max_{i \in \Out} w_i$ is the largest outlier weight.
\end{theorem}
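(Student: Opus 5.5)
The plan is to fix the meaning of $F_{\I}(\theta)$ first, since the statement leaves it implicit. I take $F_{\I}(\theta) = \sum_{i\in\I} w_i\, L(o_i,\theta)$: the weighted risk restricted to the inlier set, using the same SNAP weights $w_i$ computed from all $n$ entities, with no renormalization. This is the only reading under which the stated identity holds exactly.

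\textbf{Step 1 (the identity).} Since $\I$ and $\Out=\I^c$ partition $\{1,\dots,n\}$, the objective in \eqref{eqn:snap_model} splits as
\begin{equation*}
F(\theta)=\sum_{i\in\I} w_i L(o_i,\theta)+\sum_{i\in\Out} w_i L(o_i,\theta)=F_{\I}(\theta)+\sum_{i\in\Out} w_i L(o_i,\theta).
\end{equation*}
Hence $F(\theta)-F_{\I}(\theta)=\sum_{i\in\Out} w_i L(o_i,\theta)$. The absolute value can be dropped on the left-hand side because every term on the right is nonnegative. This uses $w_i\ge 0$ (property 1 of the agreement weights, which holds since $\kappa>0$ in \eqref{eqn:agreement_weight}) and the standing assumption that the loss satisfies $L\ge 0$.

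\textbf{Step 2 (the bound).} For each $i\in\Out$ we have $0\le w_i\le w_{\max}^{\mathrm{out}}$ by the definition of the maximum. Multiplying by $L(o_i,\theta)\ge 0$ preserves the inequality, and summing over $\Out$ gives the claim.

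\textbf{Main obstacle.} The argument is routine; the only delicate point is being explicit about the two conventions it relies on. The first is $L\ge 0$. If this is not already part of the ERM setup, I would state it as a hypothesis, since without it the absolute value cannot be removed. The second is that $F_{\I}$ uses the unnormalized inlier weights; if the weights were renormalized over $\I$, the identity would no longer be exact.

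Finally, I would note that combining this bound with Theorem~\ref{thm:key_result} gives $|F(\theta)-F_{\I}(\theta)|\le e^{-cn}\sum_{i\in\Out}L(o_i,\theta)$, or $e^{-cn^2}\sum_{i\in\Out}L(o_i,\theta)$ for the Gaussian kernel. This is the interpretive payoff of the theorem.
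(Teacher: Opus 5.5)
Your proof is correct and follows the argument the paper intends: the paper gives no separate proof, since the identity is just the split of \eqref{eqn:snap_model} over $\I$ and $\Out$, with $F_{\I}$ the unnormalized inlier part, and the inequality is the termwise bound $w_i \le w_{\max}^{\mathrm{out}}$. You are right to state the two conventions the paper leaves implicit, namely $L\ge 0$ and $F_{\I}$ using the same weights without renormalization.
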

Outliers receive very small weights, with even $w_{\max}^{\mathrm{out}}$ being negligible. The key result of SNAP weighting in Theorem~\ref{thm:key_result} demonstrates exponential suppression of outlier weights, tightly bounding $w_{\max}^{\mathrm{out}}$ by $e^{-cn}$ or $e^{-cn^2}$.

\noindent
\textbf{Efficiency Analysis.} 
In SNAP, if no outliers are present, all inliers receive substantial weights, contributing fully to the computation and achieving statistical efficiency. In this clean regime, $w_{\max}^{\mathrm{out}} = 0$, showing that robustness does not compromise accuracy on uncontaminated data.

\section{Applications of SNAP}

We demonstrate SNAP on three tasks. The last one on moving average demonstrates its extension to temporal data.

\subsection{Averaging in $\R^m$}
\label{sec:Rm}

Vector averaging is a fundamental aggregation operation, providing a principled way to combine multiple vector-valued observations into a single representative quantity. It is widely used across many fields. We discuss it in detail because it underpins a key result of this paper with practical relevance and broad applicability.

Given $\mathcal{X} = \mathbb{R}^m$, two popular distances are the $\ell_2$ norm and its squared version. Using the squared $\ell_2$ norm, optimization in \eqref{eqn:GM} yields the arithmetic average (Fr\'echet mean), which is well-known to be non-robust, with a breakdown point of zero. In contrast, optimization with the $\ell_2$ norm has no closed-form solution; the popular Weiszfeld algorithm \cite{beck_weiszfelds_2015} solves it iteratively. This $\ell_2$-based solution, known as the geometric median, is robust with a breakdown point of 0.5. These robustness results follow from the general framework in \cite{Nienkötter2025}, which shows that metric distance functions (e.g. $\ell_2$) guarantee a breakdown point $\geq 0.5$, whereas non-metric distances (e.g. $\ell_2^2$) can yield a breakdown point of zero. However, even the robust $\ell_2$ solution is influenced by outliers and not entirely immune to their effect.

Using the weights $w_i$, SNAP defines the robust vector averaging as:
\begin{eqnarray}
\ell_2: & \bar{\vector{x}} = \arg \displaystyle\min_{\bar{\vector{x}}} \displaystyle\sum_{i=1}^n w_i \| \vector{x}_i - \bar{\vector{x}} \| & \label{eqn:vector_averaging} \\
\ell_2^2: & \bar{\vector{x}} = \arg \displaystyle\min_{\bar{\vector{x}}} \displaystyle\sum_{i=1}^n w_i \| \vector{x}_i - \bar{\vector{x}}\|^2 = \sum_{i=1}^n w_i \vector{x}_i & \label{eqn:vector_averaging_squared}
\end{eqnarray}
\textbf{Robustness and Efficiency Analysis.}
For $\ell_2$, there is no closed-form solution for \eqref{eqn:vector_averaging}, so we apply the weighted Weiszfeld algorithm. Thanks to the properties of SNAP weights, the influence of outliers is expected to be strongly limited.

The non-iterative solution \eqref{eqn:vector_averaging_squared} for $\ell_2^2$ is particularly interesting. Without weighting, it reduces to the arithmetic average and is not robust. As we show in Section~\ref{sec:exp_Rm}, SNAP weighting suppresses outliers so effectively that this solution outperforms both the standard Weiszfeld algorithm ($\ell_2$) and its SNAP-weighted variant in \eqref{eqn:vector_averaging}.

The following theorem (see Appendix \ref{app:gaussian} for proof) indicates that the agreement weights in \eqref{eqn:agreement_weight} strongly support the average computation in general.
%
\begin{theorem}[restate=theoremGaussian, name={}]
We use $\ell_2^2$ and $\kappa(x)=e^{-x/\sigma}$ or $\kappa(x)=e^{-x^2/\sigma^2}$ for the agreement weights in $\R^m$. If the data are Gaussian distributed, then $\bar{\vector{x}}$ in \eqref{eqn:vector_averaging_squared} exactly computes the true average.
\label{theorem:gaussian}
\end{theorem}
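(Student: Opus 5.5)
The plan is to read the statement in the population (large-$n$) sense. The data are i.i.d.\ samples $\vector{x}_i \sim \mathcal{N}(\boldsymbol{\mu},\Sigma)$, and ``exactly computes the true average'' means that the weighted mean $\sum_i w_i \vector{x}_i$ is an unbiased estimator of $\boldsymbol{\mu}$ and converges to $\boldsymbol{\mu}$. The whole argument is a symmetry argument: under $\ell_2^2$ the weights depend only on how far each point sits from the centre, and the Gaussian is symmetric about $\boldsymbol{\mu}$.

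\textbf{Step 1 (weights as a function of the configuration).} Expand $\sum_{j}\|\vector{x}_i-\vector{x}_j\|^2 = n\|\vector{x}_i-\bar{\vector{x}}_0\|^2 + \sum_j\|\vector{x}_j-\bar{\vector{x}}_0\|^2$, where $\bar{\vector{x}}_0$ is the plain arithmetic mean. Then $D_i = n\|\vector{x}_i-\bar{\vector{x}}_0\|^2 + S$, with $S$ common to all $i$. The denominator $D_a$ is also common, so $\DA_i$, and hence $w_i$ by \eqref{eqn:agreement_weight}, is a function of $\|\vector{x}_i-\bar{\vector{x}}_0\|^2$ together with symmetric functions of the whole sample. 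This holds for either kernel $\kappa(x)=e^{-x/\sigma}$ or $e^{-x^2/\sigma^2}$, since both are functions of $\DA_i$ only.

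\textbf{Step 2 (reflection equivariance).} Consider the map $\vector{x}\mapsto 2\boldsymbol{\mu}-\vector{x}$, applied to all points at once. It preserves the Gaussian law of the sample. It also preserves all pairwise distances, so by the Geometric Invariance proposition every $w_i$ is unchanged. The estimator $\bar{\vector{x}}=\sum_i w_i\vector{x}_i$ is therefore mapped to $2\boldsymbol{\mu}-\bar{\vector{x}}$, because $\sum_i w_i=1$. Since the law is invariant, $\bar{\vector{x}}$ and $2\boldsymbol{\mu}-\bar{\vector{x}}$ have the same distribution. The weights lie in $[0,1]$ and Gaussian moments are finite, so $\bar{\vector{x}}$ is integrable, and we conclude $\mathbb{E}[\bar{\vector{x}}]=\boldsymbol{\mu}$. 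The same conclusion holds without expectation in the population limit. There the weight of a point at $\vector{x}$ tends to a deterministic function $g(\|\vector{x}-\boldsymbol{\mu}\|^2)$ by the law of large numbers applied to $D_a/n^2$, $S/n$ and $\bar{\vector{x}}_0$. The weighted mean then tends to $\int \vector{x}\,g(\|\vector{x}-\boldsymbol{\mu}\|^2)\,\phi(\vector{x})\,d\vector{x}\big/\int g\,\phi$, which equals $\boldsymbol{\mu}$ by the symmetry $\vector{x}-\boldsymbol{\mu}\mapsto-(\vector{x}-\boldsymbol{\mu})$ of both $g$ and the Gaussian density $\phi$.

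\textbf{Main obstacle.} The hard part is making the limiting statement rigorous, in particular showing that the normalising sum $\sum_j\kappa(\DA_j)$, suitably rescaled, converges to a positive constant. Note that $\DA_i\sim 1/n$, so for fixed $\sigma$ the kernel tends to $\kappa(0)$ and the weights become asymptotically uniform. I would first handle this case, where the claim reduces to the ordinary law of large numbers. Separately I would note that for general scaling the reflection argument in Step 2 needs no limit at all and gives exact unbiasedness. For a non-isotropic $\Sigma$ the function $g$ is still even in $\vector{x}-\boldsymbol{\mu}$, and evenness is all the symmetry argument uses.
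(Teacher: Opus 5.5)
Your argument is correct, but it takes a genuinely different route from the paper.

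The paper works purely at the population level. It replaces the sums in \eqref{eqn:disagreement_score} by integrals against $\mathcal{N}(\boldsymbol{\mu},\boldsymbol{\Sigma})$ and computes $D_{\vector{x}}=\trace{\boldsymbol{\Sigma}}+\|\vector{x}-\boldsymbol{\mu}\|^2$ and $D_a=2\,\trace{\boldsymbol{\Sigma}}$ in closed form. For $e^{-x/\sigma}$ it notes that the normalized weight function is exactly the density $\mathcal{N}(\vector{x};\boldsymbol{\mu},\sigma\,\trace{\boldsymbol{\Sigma}}\vector{I}_m)$, whose mean is $\boldsymbol{\mu}$. For $e^{-x^2/\sigma^2}$ it uses oddness of the centered integrand.

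Your reflection-equivariance argument instead gives an exact finite-$n$ statement, $\mathbb{E}[\bar{\vector{x}}]=\boldsymbol{\mu}$, with no computation. It also shows that neither Gaussianity nor the specific kernel matters. Any law symmetric about $\boldsymbol{\mu}$ with a finite first moment suffices, together with any weights that are functions of the pairwise distances; this includes the data-driven $\bmad$, $\sigmamed$ and the $\DA_{\mbox{sm}}$ shift used in practice. One small fix: justify the invariance of the weights directly from their dependence on pairwise distances rather than by citing Geometric Invariance, since $\vector{x}\mapsto-\vector{x}$ is not a rotation when $m$ is odd. What the paper's computation buys in return is an explicit form of the limiting weight function.

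Your Step 1 also exposes two points the paper's passage to the continuum glosses over.
\begin{itemize}
\item In the finite-sample definition, with your $S$, $\DA_i=\|\vector{x}_i-\bar{\vector{x}}_0\|^2/(2S)+1/(2n)=O(1/n)$. So the paper's $O(1)$ population score corresponds to $n\DA_i$, and for fixed $\sigma$ the weights simply become uniform, as you note.
\item The actual limit of $\sum_i w_i\vector{x}_i$ is $\int\vector{x}\,g\,\phi\,/\int g\,\phi$, with the data density $\phi$ included. The paper instead normalizes $\kappa(\DA_{\vector{x}})$ against Lebesgue measure.
\end{itemize}
Both expressions equal $\boldsymbol{\mu}$ by evenness about $\boldsymbol{\mu}$, so the conclusion is unaffected. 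The LLN-plus-Lipschitz step you flag as the main obstacle is routine, and it is all that remains to make your population statement rigorous.
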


Now we derive and discuss two further results for the non-iterative solution \eqref{eqn:vector_averaging_squared}. Given the inlier set $\I$, let $\vector{\mu}_\I$ denote the sample mean of inliers, which represents the optimal computation result if we were able to reliably detect all outliers and can be used for the purpose of performance measurement. We provide two deterministic finite-sample bounds on the deviation between our weighted estimator and the inlier mean.
\begin{theorem}[restate=theoremBoundA, name=Deterministic Finite-Sample Bound]
\label{thm:bound_1}
For any weights $w_i$, the weighted estimator \eqref{eqn:vector_averaging_squared} satisfies:
\[
\label{eqn:deterministic_bound}
\|\bar{\vector{x}} - \vector{\mu}_\I\|
\le
w_{\max} \sum_{i\in \I} \|\vector{x}_i - \vector{\mu}_\I\| + \sum_{i \in \Out} w_i \|\vector{x}_i - \vector{\mu}_\I\|
\]
where $w_{\max}^{\mathrm{in}} = \max_{i \in \I} w_i$ is the largest inlier weight.
\end{theorem}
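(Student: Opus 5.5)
We prove the bound in Theorem~\ref{thm:bound_1} directly from the definition of the weighted estimator in \eqref{eqn:vector_averaging_squared}. It is a short algebraic argument that uses only $w_i\ge 0$ and $\sum_{i=1}^n w_i = 1$. No assumption on the geometry of the data is needed.

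First we exploit the normalization of the weights to subtract the inlier mean inside the sum:
\[
\bar{\vector{x}} - \vector{\mu}_\I \;=\; \sum_{i=1}^n w_i \vector{x}_i - \Big(\sum_{i=1}^n w_i\Big)\vector{\mu}_\I \;=\; \sum_{i\in\I} w_i(\vector{x}_i-\vector{\mu}_\I) + \sum_{i\in\Out} w_i(\vector{x}_i-\vector{\mu}_\I).
\]
This identity holds for any weights on the simplex. It isolates the inlier contribution from the outlier contribution.

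Next we apply the triangle inequality to the full sum and use $w_i\ge 0$ to pull each weight out of the norm. This gives
\[
\|\bar{\vector{x}} - \vector{\mu}_\I\| \le \sum_{i\in\I} w_i\|\vector{x}_i-\vector{\mu}_\I\| + \sum_{i\in\Out} w_i\|\vector{x}_i-\vector{\mu}_\I\|.
\]
The outlier term already has the form stated in the theorem.

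For the inlier term, we bound each $w_i$ with $i\in\I$ by $w_{\max}^{\mathrm{in}}=\max_{i\in\I} w_i$, which is the quantity written $w_{\max}$ in the statement. This yields the first summand and completes the argument.

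There is no genuine obstacle here. The only delicate point is interpretive: one could sharpen the inlier term using $\sum_{i\in\I}(\vector{x}_i-\vector{\mu}_\I)=\vector{0}$, which would give a centered bound of the form $\max_{i\in\I}|w_i-\bar w_\I|\sum_{i\in\I}\|\vector{x}_i-\vector{\mu}_\I\|$. We keep the cruder bound because it matches the statement exactly. Combined with Theorem~\ref{thm:key_result}, it shows that the outlier summand is exponentially small, so the deviation is governed by inlier dispersion alone.
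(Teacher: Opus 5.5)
Your proof is correct and follows the paper's argument: the same split of $\bar{\vector{x}} - \vector{\mu}_\I$ into inlier and outlier contributions, then the triangle inequality, then the bound $w_i \le w_{\max}^{\mathrm{in}}$ on the inlier terms. You also rightly state that the decomposition needs $w_i \ge 0$ and $\sum_i w_i = 1$, which the paper uses only implicitly despite saying ``for any weights''.
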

See Appendix \ref{app:bound_1} for proof. The second term measures the influence of outliers, which is automatically downweighted by SNAP weighting. This fact supports the intuitive understanding and practical working of the SNAP framework.

Now we give a slightly different derivation for further discussion. It is easy to show:
\[
\label{eqn:deterministic_bound}
\|\bar{\vector{x}} - \boldsymbol{\mu}_\mathcal{I}\|
\ \leq \ \left\| \sum_{i\in \mathcal{I}} \left|w_i - \frac{1}{|\mathcal{I}|}\right| \vector{x}_i \right\| + \sum_{i \in \Out} w_i \|\vector{x}_i - \vector{\mu}_\mathcal{I}\| 
\]
If there are no outliers, the second term vanishes. The roughly uniform inliers receive similar weights $w_i \approx 1/|\mathcal{I}|$. Thus, the computation becomes $\bar{\vector{x}} \approx \boldsymbol{\mu}_\I = 1/|\mathcal{I}| \sum_{i \in \mathcal{I}} \vector{x}_i$, recovering classical $O(1/\sqrt{n})$ scaling. Importantly, by involving all inliear data, this estimation is statistically efficient.

\subsection{PCA}

PCA estimates a subspace capturing the directions of highest variance, often for dimensionality reduction. However, it is sensitive to outliers, as illustrated in Figure~\ref{fig:illustration} (right). By applying $\ell_2$-based SNAP weights, we focus PCA on the inliers via a weighted covariance matrix (assuming $\mathbf{x}_i$ are column vectors):
\begin{equation}
    \boldsymbol{\Sigma} \ = \ \sum_{i=1}^n w_i (\vector{x}_i - \bar{\vector{x}}) (\vector{x}_i - \bar{\vector{x}})^T
\end{equation}
PCA is then performed on this weighted covariance matrix via standard eigenvalue decomposition.

\subsection{Moving Average}

Given a time series $\{x_t\}_{t=1}^n \subset \mathbb{R}$, we extend SNAP to produce a smoothed series $\{\hat{x}_t\}_{t=1}^n$ that captures the underlying trend while reducing noise. This extension is detailed in Appendix~\ref{app:MA}, and experiments show it outperforms the standard exponentially moving average.

\section{Experiments}
\label{sec:experiments}

\subsection{Averaging in $\R^m$}
\label{sec:exp_Rm}

\begin{table}[t]
\caption{SNAP averaging in $\R^m$.
}
\label{tab:vector_experiment}
\begin{center}
    $\R^{2}$
\end{center}

\vspace{-5mm}
\begin{center}
   \begin{small}
        \begin{sc}
            \begingroup
            \setlength{\tabcolsep}{5pt}
            \begin{tabular}{ll|rrrrr}
            \toprule
            Outlier \% & & 10\% & 20\% & 30\% & 40\% & 49\% \\
            \midrule
            \midrule
            \textit{Inlier Mean} &  & \textit{0.095} & \textit{0.102} & \textit{0.104} & \textit{0.110} & \textit{0.121} \\
            \midrule
            Weiszfeld &  & 0.190 & 0.379 & 0.644 & 1.052 & 1.677 \\
            \midrule
            \multirow[c]{2}{*}{SNAP $\ell_2$ $\kappa_l$} & $\bmad$ & 0.162 & 0.244 & 0.334 & 0.484 & 0.787 \\
             & $\bmed$ & 0.152 & 0.223 & 0.313 & 0.475 & 0.804 \\
            \midrule
            \multirow[c]{2}{*}{SNAP $\ell_2$ $\kappa_g$} & $\sigmamad$ & 0.139 & \textbf{0.182} & \textbf{0.228} & 0.390 & 0.756 \\
             & $\sigmamed$ & 0.142 & 0.191 & 0.244 & \textbf{0.373} & \textbf{0.711} \\
            \midrule
            \multirow[c]{2}{*}{SNAP $\ell_2^2$ $\kappa_l$} & $\bmad$ & 0.150 & 0.248 & 0.382 & 0.731 & 1.408 \\
             & $\bmed$ & 0.140 & 0.232 & 0.384 & 0.801 & 1.599 \\
            \midrule
            \multirow[c]{2}{*}{SNAP $\ell_2^2$ $\kappa_g$} & $\sigmamad$ & \textbf{0.134} & 0.196 & 0.314 & 0.815 & 1.693 \\
             & $\sigmamed$ & 0.137 & 0.207 & 0.310 & 0.680 & 1.523 \\
            \bottomrule
            \end{tabular}
            \endgroup
        \end{sc}
    \end{small}
\end{center}

\begin{center}
    $\R^{50}$
\end{center}

\vspace{-5mm}
\begin{center}
    \begin{small}
        \begin{sc}
            \begingroup
            \setlength{\tabcolsep}{5pt}
            \begin{tabular}{ll|rrrrr}
            \toprule
            Outlier \% & & 10\% & 20\% & 30\% & 40\% & 49\% \\
            \midrule
            \midrule
            \textit{Inlier Mean} &  & \textit{0.519} & \textit{0.549} & \textit{0.591} & \textit{0.635} & \textit{0.682} \\
            \midrule
            Weiszfeld &  & 0.846 & 1.637 & 2.812 & 4.741 & 8.424 \\
            \midrule
            \multirow[c]{2}{*}{SNAP $\ell_2$ $\kappa_l$} & $\bmad$ & 0.737 & 0.745 & 0.744 & 0.724 & 0.800 \\
            & $\bmed$ & 0.557 & 0.590 & 0.633 & 0.675 & 0.833 \\
            \midrule
            \multirow[c]{2}{*}{SNAP $\ell_2$ $\kappa_g$} & $\sigmamad$ & 0.637 & 0.640 & 0.641 & 0.652 & 0.911 \\
            & $\sigmamed$ & 0.540 & 0.569 & 0.609 & 0.648 & \textbf{0.699} \\
            \midrule
            \multirow[c]{2}{*}{SNAP $\ell_2^2$ $\kappa_l$} & $\bmad$ & 0.687 & 0.705 & 0.715 & 0.704 & 1.758 \\
            & $\bmed$ & 0.540 & 0.574 & 0.618 & 0.677 & 2.107 \\
            \midrule
            \multirow[c]{2}{*}{SNAP $\ell_2^2$ $\kappa_g$} & $\sigmamad$ & 0.614 & 0.621 & 0.627 & 0.642 & 3.039 \\
            & $\sigmamed$ & \textbf{0.529} & \textbf{0.559} & \textbf{0.599} & \textbf{0.639} & 0.894 \\
            \bottomrule
            \end{tabular}
            \endgroup
        \end{sc}
    \end{small}
\end{center}
\end{table}

\begin{figure*}[t]
    \centering
    \begin{tikzpicture}[node distance=0pt, inner sep=0pt]
        \node (a) {\includegraphics[width=0.25\linewidth]{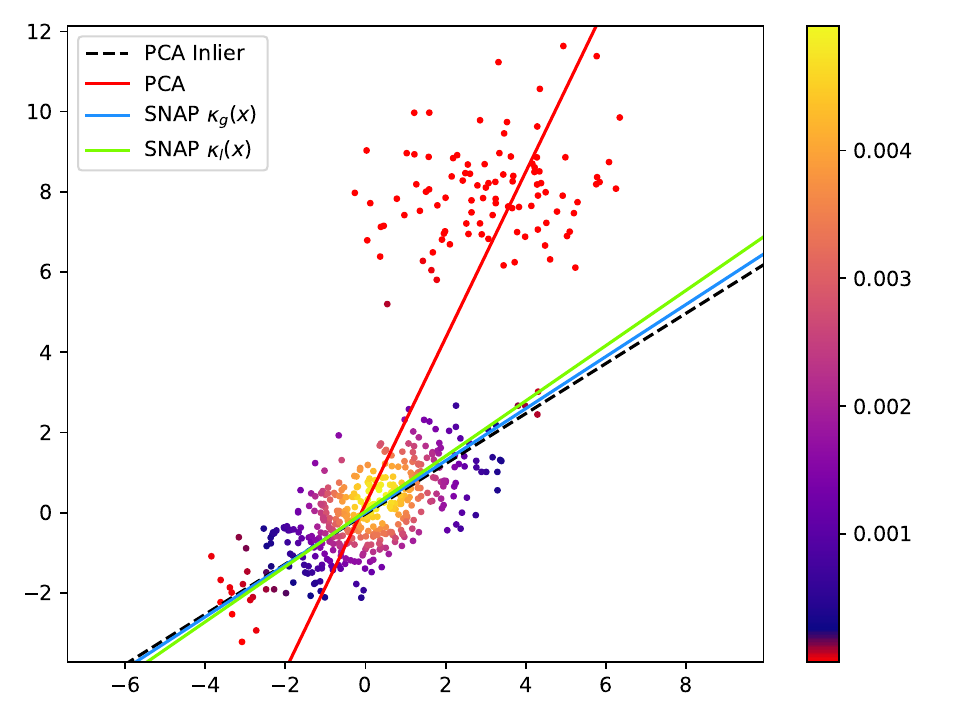}};
        \node[anchor=center] at ($(a.north west)+(0.6,-1.1)$) {\large a)}; 
        \node[right=of a] (b) {\includegraphics[width=0.25\linewidth]{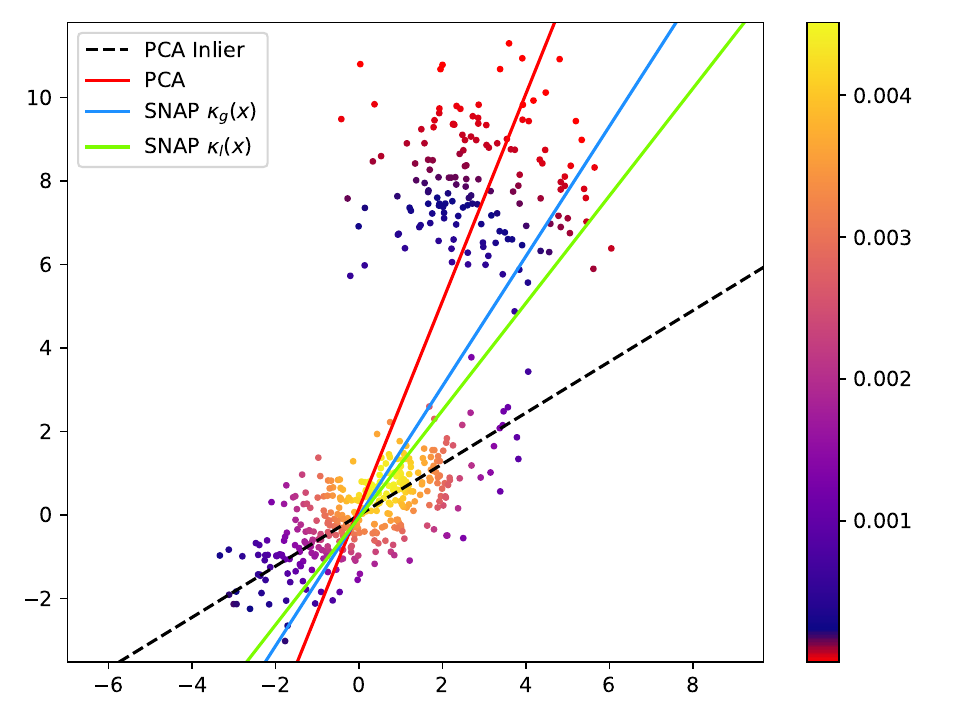}};
        \node[anchor=center] at ($(b.north west)+(0.6,-1.1)$) {\large b)}; 
        \node[right=of b] (c) {\includegraphics[width=0.25\linewidth]{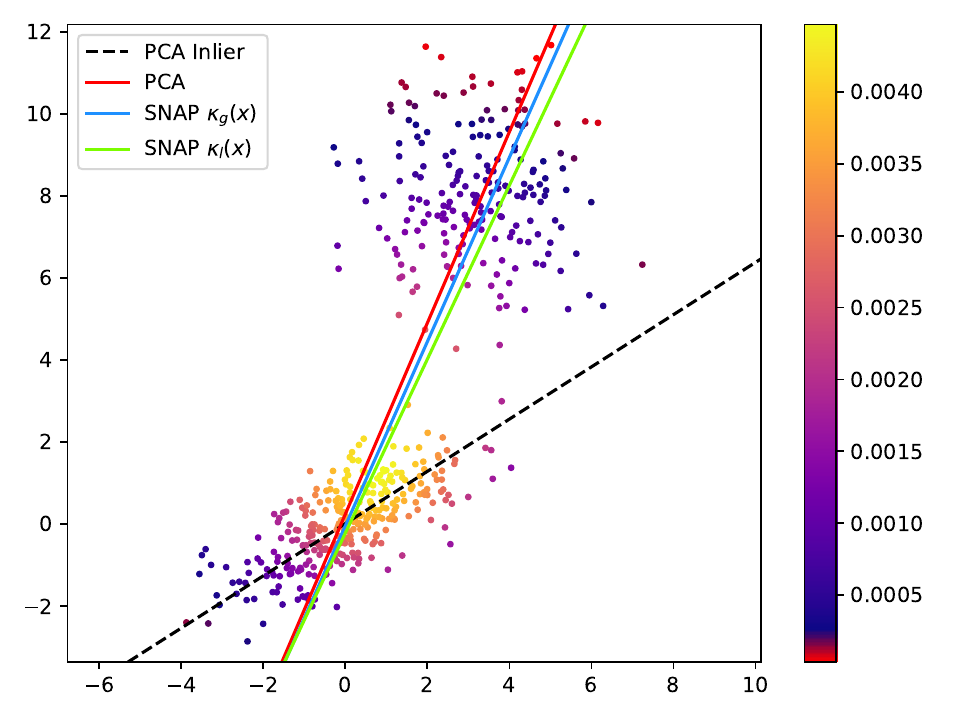}};
        \node[anchor=center] at ($(c.north west)+(0.6,-1.1)$) {\large c)}; 
        \node[right=of c] (d) {\includegraphics[width=0.25\linewidth]{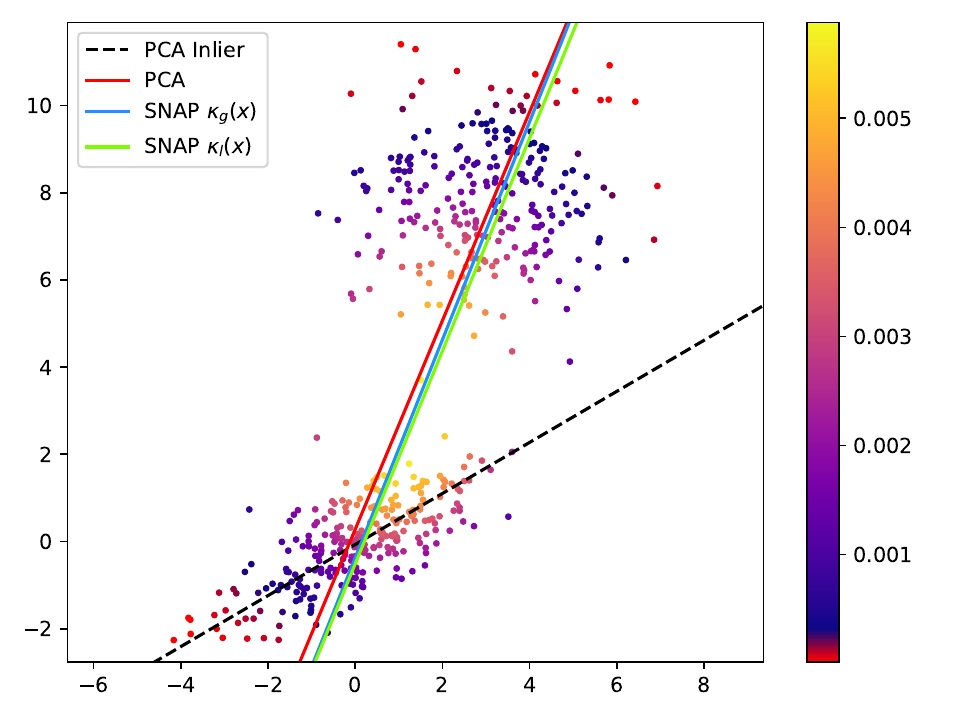}};
        \node[anchor=center] at ($(d.north west)+(0.6,-1.1)$) {\large d)}; 
    \end{tikzpicture}
    \caption{Example results for PCA estimation, with and without SNAP weights. a): 20\% outliers; b) 30\% outliers; c) 40\% outliers; d) 49\% outliers. The result for 10\% outliers is shown in Figure \ref{fig:illustration} (right).}
    \label{fig:pca_experiments}
\end{figure*}

\begin{table*}[t]
\caption{SNAP PCA (all angle errors in degree).}\label{tab:pca_experiment}
\begin{center}
\begin{sc}
\begin{tabular}{ll|rr|rr|rr|rr|rr}
\toprule
Outlier \% & & \multicolumn{2}{c|}{10\%} & \multicolumn{2}{c|}{20\%} & \multicolumn{2}{c|}{30\%} & \multicolumn{2}{c|}{40\%} & \multicolumn{2}{c}{49\%} \\
Error & & Angle & Proj & Angle & Proj & Angle & Proj & Angle & Proj & Angle & Proj \\
\midrule
\midrule
PCA & & 30.037 & 0.708 & 33.542 & 0.781 & 35.000 & 0.811 & 35.682 & 0.824 & \textbf{36.109} & \textbf{0.833} \\
\midrule
\multirow[c]{2}{*}{SNAP $\kappa_l(\DA)$} & $\bmad$ & 1.283 & 0.032 & 2.619 & 0.065 & 19.818 & 0.479 & \textbf{34.127} & \textbf{0.793} & 36.531 & 0.841 \\
 & $\bmed$ & 1.024 & 0.025 & 3.749 & 0.092 & 23.255 & 0.558 & 35.009 & 0.811 & 36.792 & 0.847 \\
\midrule
\multirow[c]{2}{*}{SNAP $\kappa_g(\DA)$} & $\sigmamad$ & \textbf{0.945} & \textbf{0.023} & \textbf{0.941} & \textbf{0.023} & 23.265 & 0.558 & 35.545 & 0.822 & 36.838 & 0.847 \\
 & $\sigmamed$ & 1.100 & 0.027 & 1.010 & 0.025 & \textbf{12.676} & \textbf{0.310} & 35.269 & 0.816 & 36.905 & 0.849 \\
\bottomrule
\end{tabular}
\end{sc}
\end{center}
\end{table*}

Table~\ref{tab:vector_experiment} shows the mean absolute error results over 100 trials for SNAP with kernel functions $\kappa_g(\DA)$ and $\kappa_l(\DA)$, each using both variants of robust $b$ and $\sigma$ estimation. SNAP $\ell_2$ and SNAP $\ell_2^2$ implement robust averaging according to \eqref{eqn:vector_averaging} and \eqref{eqn:vector_averaging_squared}, respectively. The unweighted Weiszfeld algorithm is included as a baseline, and the Inlier Mean provides a lower bound on error. Datasets in $\mathbb{R}^2$ and $\mathbb{R}^{50}$ are generated as follows: first, an inlier set $\I$ is drawn from a Gaussian distribution with $\mu=\mathbf{0}$ and $\sigma=\mathbf{I}_m$; then, uniformly distributed outliers in $[0,10]$ are added in all dimensions, for a total of 200 points per set. An example $\mathbb{R}^2$ dataset with 30\% outliers is shown in Figure~\ref{fig:illustration} (left).

All SNAP variants consistently outperform Weiszfeld, achieving much lower average errors. SNAP $\ell_2$ with $\kappa_g(\DA)$ performs best in $\mathbb{R}^2$, while SNAP $\ell_2^2$ with $\kappa_g(\DA)$ achieves the best overall results in $\mathbb{R}^{50}$. This demonstrates that SNAP outperforms the robust iterative Weiszfeld algorithm even using the simple non-iterative weighted mean in \eqref{eqn:vector_averaging_squared}.

In high dimensions, absolute differences shrink, which challenges Weiszfeld (Table~\ref{tab:vector_experiment}). However, relative distance ordering is often preserved, which underlies SNAP weighting and explains its robust performance in $\mathbb{R}^{50}$.

Additional comparison with component-wise median and two multivariate variants of median of means \cite{Hogsgaard2025} is shown in Appendix \ref{app:weiszfeld}. Our SNAP results outperform all compared methods, particularly in high dimensions.

\subsection{PCA}

Table~\ref{tab:pca_experiment} reports the average performance of SNAP compared to standard PCA. We generated 100 datasets with 500 points each in $\mathbb{R}^2$: inliers are Gaussian with $\boldsymbol{\mu}=\mathbf{0}$ and $\boldsymbol{\Sigma}=[2,1;1,1]$, while outliers are Gaussian with $\boldsymbol{\mu}=[3,8]$ and $\boldsymbol{\Sigma}=2\mathbf{I}_2$. Methods are compared by projecting to one dimension using: (1) \textbf{Angle error} -- the difference in degrees between the ground truth and computed first principal component; and (2) \textbf{Projection error} -- the Frobenius norm of the difference between the projection matrices $\mathbf{U}\mathbf{U}^\top$ of standard PCA on the inliers and the method under evaluation. Example results for 10\% outliers are shown in Figure~\ref{fig:illustration} (right), and results for 20--49\% outliers are in Figure~\ref{fig:pca_experiments}. Full error tables are given in Table~\ref{tab:pca_experiment}.  

SNAP effectively handles outliers, reconstructing inlier principal components with substantially lower error than standard PCA. In most cases, $\kappa_g(\DA)$ with $\sigmamad$ performs best, with other SNAP variants also performing well. Errors increase sharply beyond 30\% outliers as the outlier set forms a separate cluster, violating the ARH assumption (Section~\ref{sec:arh}) and receiving higher SNAP weights. In this regime, a common approach is to detect multiple clusters and apply a robust estimator locally; SNAP fits naturally into this pipeline.

\section{Discussions}
\label{sec:discussion}

\subsection{Potential of SNAP Robust Computation}

\noindent
\textbf{Robustifying non-metric consensus computation.}
Consensus methods sometimes rely on non-metric or dissimilarity measures that are efficient but sensitive to outliers. SNAP weighting addresses this by emphasizing mutually consistent inliers and suppressing dispersed outliers, providing substantial robustness with minimal cost. As shown in Section~\ref{sec:Rm}, SNAP even outperforms the iterative Weiszfeld algorithm when applied to the simple non-iterative weighted mean in \eqref{eqn:vector_averaging_squared} using the non-metric $\ell_2^2$.

\noindent
\textbf{SNAP as guided initialization.}
Although SNAP-based weighting may not achieve the global optimum, it provides a strong self-supervised initialization for iterative robust estimation. By emphasizing consensus and suppressing outliers, SNAP reduces the effective search space and improves convergence, serving both as a standalone robust method and an efficient preconditioning strategy.

\noindent
\textbf{Working with high dimension.}
In high-dimensional spaces, where absolute differences shrink, many methods struggle, a limitation well known for the Weiszfeld algorithm (Section~\ref{sec:Rm}) and others. SNAP differs fundamentally by relying on the relative ordering of distances, which is often preserved even in high dimensions, making it particularly effective for high-dimensional data.

\subsection{Limitations and Extensions}

The current weight computation in \eqref{eqn:agreement_weight} has quadratic complexity $O(n^2)$. While Section~\ref{sec:scalability} demonstrates that SNAP can handle moderately sized datasets, addressing scalability for truly large datasets with appropriate approximation techniques remains an important task. Moreover, the SNAP model in \eqref{eqn:snap_model} currently employs task-independent weighting; extending it to task-dependent schemes, particularly for applications such as function fitting, is a promising problem. Finally, although our experiments focus on continuous spaces, SNAP naturally extends to discrete domains, which we leave for future investigation.


\section{Conclusion}


We introduce \textbf{SNAP} (Self-Consistent Agreement Principle), a general framework for self-supervised weighting based on mutual agreement. Decoupled from specific applications, SNAP provides a flexible, easy-to-use, broadly applicable tool for robust estimation, bridging theoretical rigor with practical effectiveness. Its agreement weights are fully parameter-free and model-agnostic, unlike many robust methods, and the framework applies to arbitrary spaces, including non-numerical domains. We envision SNAP as the foundation for a new framework for robust computation; this paper establishes its theoretical underpinnings and presents initial applications.




\section*{Acknowledgments}

This work was supported by the Deutsche Forschungsgemeinschaft (DFG: CRC 1450 – 431460824, SPP2363, to X.J.), the National Natural Science Foundation of China (Grant Nr. W2433165, to A.N.), and the Key Research and Development Program of Sichuan Province (Grant Nr. 2023YFWZ0009, to A.N.).%

\bibliography{References}
\bibliographystyle{IEEEtran}

\newpage

\onecolumn

\appendix

\subsection{Related Work on Robust Estimation, Geometry, and Agreement Principles}
\label{app:related_work}

Robust estimation has a long history in statistics and optimization, with classical approaches focusing on bounding the influence of corrupted observations through modified loss functions or constrained estimators. In structured geometric settings, this line of work has led to strong theoretical guarantees. For example, Maunu and Lerman~\cite{maunu2019robust} study robust subspace recovery under adversarial outliers and provide recovery guarantees under precise structural assumptions, while Yang and Carlone~\cite{yang2021certifiably} develop certifiably optimal and scalable methods for outlier-robust geometric perception via semidefinite relaxations. These methods are tailored to specific geometric objectives such as subspace estimation, robust consensus, or geometric median computation, and focus on global optimality and recovery guarantees. In contrast, SNAP employs a weighted scheme for robust computation. This paper focuses on a parameter-free, data-driven weighting approach that downweights outliers, thereby enhancing robustness

Beyond classical robustness, recent work has emphasized the role of agreement, consistency, and emergent structure as organizing principles for learning and decision-making. A growing body of work in social choice, alignment, and collective decision-making studies robustness through agreement-based mechanisms rather than loss shaping. Examples include pairwise calibrated rewards for pluralistic alignment~\cite{halpern2025pairwise}, welfare and representation guarantees in consensus mechanisms~\cite{deraaij2025representation}, and generative approaches to social choice~\cite{boehmer2025generative}. These works provide formal frameworks in which robustness arises from mutual consistency or agreement among agents, rather than from individual pointwise penalties.

Related perspectives also arise in proximity-based and alignment-oriented frameworks. Shirali et al.~\cite{shirali2025direct} study direct alignment under heterogeneous preferences. Applications of agreement principles to robust selection and deliberation further illustrate the breadth of this paradigm~\cite{assos2025alternates}. While SNAP does not explicitly instantiate these agreement or proximity mechanisms, such frameworks suggest promising future directions.

Taken together, these lines of work highlight complementary approaches to robustness: classical robust estimation emphasizes bounded influence and recovery guarantees for specific objectives, geometric methods focus on certifiable optimality under structured assumptions, and agreement-based frameworks study robustness as an emergent property of mutual consistency. SNAP contributes to this landscape by providing a lightweight, weight-based approach to robust computation.

\subsection{Properties of Agreement Weights}


\subsubsection{Proof of Proposition \ref{prop:sensitivity}}
\label{app:sensitivity}

\propSensitivity*

\begin{proof}
Derivative of $w_i$ with respect to $\DA_i$:
\[
    \frac{\partial w_i}{\partial \DA_i} \ = \ \frac{k'(\DA_i) K - \kappa(\DA_i) \frac{\partial K}{\partial \DA_i}}{K^2}
    \ = \ \frac{k'(\DA_i) K - \kappa(\DA_i) \kappa'(\DA_i)}{K^2} \\
    \ = \ \frac{\kappa'(\DA_i) (K - \kappa(\DA_i))}{K^2}
\]
Since $\kappa(\DA)$ is a decreasing kernel, $\kappa'(\DA_i) < 0$ and $K - \kappa(\DA_i) > 0$, therefore $\partial w_i / \partial \DA_i < 0$.

Derivative of $w_i$ with respect to $\DA_j$, $j \neq i$:
\[
\frac{\partial w_i}{\partial \DA_j} = \frac{0 \cdot K - \kappa(\DA_i) \kappa'(\DA_j)}{K^2} = -\frac{\kappa(\DA_i) \kappa'(\DA_j)}{K^2}
\]
Since $\kappa'(\DA_j) < 0$, we have $\partial w_i / \partial \DA_j > 0$.  
\end{proof}

\subsubsection{Proof of Proposition \ref{prop:disagreement_sensitivity}}
\label{app:disagreement_sensitivity}

\propDisagreementSensitivity*

\begin{proof}
The disagreement score of $o_k$ is computed by:
\[ \DA_k \ = \ \frac{D_k}{2 D_k + \sum_{i\not=k, \ j\not=k} d(o_i, o_j)} \]
The way how $o_k$ is perturbed to generate $o_k'$ means $D'_k > D_k$, which results in $\DA'_k > \DA_k$. According to Proposition \ref{prop:sensitivity} we thus obtain $w'_k < w_k$.

Note that this result means that when fixing all elements except $o_k$ and letting
$d(o_k,o_{i \not= k}) \to\infty$, $w_k$ decreases and approaches to some limit. Proposition \ref{prop:outlier_amplification} further specifies this limit to be zero if $n\to\infty$.
\end{proof}

\subsubsection{Proof of Proposition \ref{prop:global_bound}}
\label{app:global_bound}

\propGlobalBound*

\begin{proof}
By construction, $\vector{w}$ and $\vector{w}^\prime$ lie in the probability simplex (although we do not interpret the values as probabilities). Hence,
\[
\|\vector{w}\|^2 \ = \ \sum_i \vector{w}_i^2 \ \le \ \sum_i \vector{w}_i \ = \ 1
\]
and similarly $\|\vector{w}^\prime\| \le 1$.
Therefore,
\[
\|\vector{w}-\vector{w}^\prime\|^2
\ = \ \|\vector{w}\|^2 + \|\vector{w}^\prime\|^2 - 2\langle \vector{w},\vector{w}^\prime\rangle
\ \le \ 2
\]
which implies $\|\vector{w}-\vector{w}^\prime\| \le \sqrt{2}$.
\end{proof}

\subsubsection{Proof of Proposition \ref{prop:outlier_amplification}}
\label{app:outlier_amplification}

\propOutlierAmplification*

\begin{proof}
As $d(x_k,x_{i \not= k}) \to\infty$,
\[
D_k = \sum_{j \neq k} d(x_k,x_j) \approx (n-1) d(x_k,x_l)
\]
\[
D_i = \sum_{j \neq i} d(x_i,x_j) \approx d(x_k,x_i), \mbox{ for any } i \neq k
\]
where $l$ is any index $\not=k$. Thus,
\[
D_a \ = \ \sum_{p=1}^n D_p \ \approx \ (n-1)d(x_k,x_l) + \sum_{i\not=k} d(x_k,x_i)
\ \approx \ 2(n-1) d(x_k,x_l)
\]
Then, the normalized disagreement scores become:
\[ \DA_k \ = \ \frac{D_k}{D_a} \ \approx \ \frac{1}{2}, \quad \DA_{i\not=k} \ = \ \frac{D_i}{D_a} \ \approx \ \frac{1}{2(n-1)} \ \to 0 \ (\mbox{if } n \to\infty) \]
Finally, the agreement weight for the extreme outlier $o_k$:
\[ w_k \ \approx \ \frac{\kappa(\frac{1}{2})}{\kappa(\frac{1}{2})+(n-1)\kappa(0)} \ \to \ 0 \]
\end{proof}

\subsubsection{Proof of Theorem \ref{thm:key_result}}
\label{app:key_result}

\Suppression*

\begin{proof}
We fix $k \in \Out$ and select $i \in \I$ with largest $\Delta_i$ among all inliers. By Assumption~\ref{ass:linear_gap} we have:
\[
\sum_{j=1}^n d(o_k,o_j) - \sum_{j=1}^n d(o_i,o_j) \ \ge \ c_0 n
\]
Dividing by the total sum $D_a = \sum_{p,q} d(o_p,o_q)$ gives:
\[
\Delta_k - \Delta_i \ \ge \ \frac{c_0 n}{D_a} \ = \ \frac{n}{c_1}
\]
since $D_a \le C n^2$ for some constant $C$.

\noindent
{\bf Case 1:} Kernel function $\kappa(x)=e^{-x/\sigma}$
\[
\frac{\kappa(\Delta_k)}{\kappa(\Delta_i)}
\ = \
\exp\!\left(-\frac{\Delta_k-\Delta_i}{\sigma}\right)
\ \le \
\exp\!\left(-\frac{n}{\sigma c_1}\right)
\]

Since there are $\alpha n$ inliers and $\kappa(\Delta_i)$ is minimized over $j\in \I$, we have:
\[
\sum_{j=1}^n \kappa(\Delta_j)
\ \ge \
\sum_{i\in \I} \kappa(\Delta_i)
\ \ge \
\alpha n \, \kappa(\Delta_i)
\]
Thus, for any outlier $k \in \Out$:
\[
w_k
=
\frac{\kappa(\Delta_k)}{\sum_j \kappa(\Delta_j)}
\ \le \
\frac{1}{\alpha n}
\frac{\kappa(\Delta_k)}{\kappa(\Delta_i)}
\ \le \
\frac{1}{\alpha n}
\exp\!\left(-\frac{n}{\sigma c_1}\right)
\ = \
\frac{1}{\alpha n} e^{-cn}
\]
The exponential term dominates the polynomial prefactor as $n\to\infty$.
Hence there exist constants $c>0$ and $n_0$ such that for all $n\ge n_0$:
\[
w_k \;\le\; e^{-c n}
\]

\noindent
{\bf Case 2:} Ke2nel function $k(x) = e^{-x^2/\sigma^2}$
\[
\Delta_k^2 - \Delta_i^2 \ = \ (\Delta_k - \Delta_i)(\Delta_k + \Delta_i) \ \ge \ c_2 n^2
\]
Thus, we obtain:
\[
w_k \ \le \ \frac{1}{\alpha n} \exp\Bigl(-\frac{\Delta_k^2 - \Delta_i^2}{\sigma^2}\Bigr)
\ \le \ \frac{1}{\alpha n} e^{-c n^2}
\]
Similarly, there exist constants $c>0$ and $n_0$ such that for all $n\ge n_0$:
\[
w_k \ \le \ e^{-c n^2}
\]
\end{proof}

\subsection{Applications of Generalized Median Computation}
\label{app:GM}

Generalized median has been intensively studied for numerous problem domains. Examples include rankings \cite{Boulakia_2011}, phase/orientation data smoothing \cite{Storath_2018,Guo2020}, 3D rotations \cite{lee_civera_robust_2025}, shapes \cite{Cunha_2019}, image segmentation \cite{Khelifi_2017,Ma2021}, clusterings \cite{Boongoen2018},  biclustering \cite{Yin2018}, graphs \cite{jiang2001,Blumenthal2021}, probability vectors (for ensemble classification) \cite{Xue2026}, anatomical atlas construction \cite{Xie2013}, averaging in Grassmann manifolds \cite{Chakraborty2021} and flag manifolds \cite{Mankovich_2023}.

Generalized median computation has found numerous applications in many fields, including machine learning (e.g. nonlinear subspace learning \cite{Chakraborty2021}), computer vision \cite{lee_civera_robust_2025}, test time augmentation \cite{Shanmugam2021}, bioinformatics \cite{Yin2018}, social sciences, computer-assisted surgery \cite{Tu2022}, and many others.


\subsection{Proofs of Theorems for Averaging in $\R^m$}

\subsubsection{Proof of Theorem \ref{theorem:gaussian}}
\label{app:gaussian}

\theoremGaussian*

\begin{proof}
This result is known in the statistics and the proof is shown for the reason of completeness only. We consider $m$-dimensional Gaussian distribution with mean $\boldsymbol{\mu} $ and covariance matrix $\boldsymbol{\Sigma}$.
Using squared Euclidean distance the numerator and denominator in \eqref{eqn:disagreement_score} for some $\vector{x}$ are:
\[
D_{\vector{x}} \ = \ \int ||\vector{x}-\vector{y}||^2 \mathcal{N}(\vector{y}; \boldsymbol{\mu},\vector{\Sigma}) \ d\vector{y} \ = \ \trace{\boldsymbol{\Sigma}} + ||\boldsymbol{\mu} -\vector{x}||^2
\]
\[
D_a \ = \ \int ||\vector{x}-\vector{y}||^2 \mathcal{N}(\vector{x}; \boldsymbol{\mu},\vector{\Sigma}) \mathcal{N}(\vector{y}; \boldsymbol{\mu} ,\boldsymbol{\Sigma}) \ d\vector{x} d\vector{y} \ = \ 2 \cdot \trace{\boldsymbol{\Sigma}}
\]
{\bf Case 1:} Kernel function $\kappa_l = e^{-x/\sigma}$ converts the disagreement into consensus measure:
\[
\kappa_l(\DA_\vector{x}) \ = \ \exp \left( - \frac{1}{\sigma} \cdot \frac{\trace{\boldsymbol{\Sigma}} + ||\vector{x}-\boldsymbol{\mu} ||^2}{2 \cdot \trace{\vector{\Sigma}}} \right) \\
\ = \ 
C \cdot \mathcal{N}(\vector{x}; \boldsymbol{\mu}, a\vector{I}_m)
\]
where $a=\sigma \cdot \trace{\vector{\Sigma}}$ and $C$ is a positive constant (also dependent on $a$). As a result, the agreement weight in \eqref{eqn:agreement_weight} becomes a Gaussian density function:
\[
w(\vector{x}) \ = \ \frac{\kappa_l(\Delta_\vector{x})}{\int \kappa_l(\Delta_\vector{x}) \ d\vector{x}} \ = \ \mathcal{N}(\vector{x}; \boldsymbol{\mu},a\vector{I}_m)
\]
Finally, the weighted consensus computation produces the mean:
\[
\int w(\vector{x}) \cdot \vector{x} \ d \vector{x} \ = \ \int \mathcal{N}(\vector{x}; \boldsymbol{\mu},a\vector{I}_m) \cdot \vector{x} \ d \vector{x} \ = \ \boldsymbol{\mu}
\]

{\bf Case 2:} Kernel function $\kappa_g = e^{-x^2/\sigma^2}$ converts the disagreement into consensus measure:
\[
\kappa_g(\DA_\vector{x})
\ = \
\exp \left( - \frac{1}{\sigma^2} \cdot \left( \frac{\trace{\boldsymbol{\Sigma}} + ||\vector{x}-\boldsymbol{\mu} ||^2}{2 \cdot \trace{\vector{\Sigma}}} \right)^2 \right)
\ = \
C \cdot \exp \left( - \left( \frac{b + ||\vector{x}-\boldsymbol{\mu} ||^2}{a} \right)^2 \right)
\]
where $a=2\sigma \cdot \trace{\vector{\Sigma}}$, $b=\trace{\boldsymbol{\Sigma}}$, and $C$ is a positive constant (also dependent on $b$). In contrast to $\kappa_l$, this is no more a Gaussian density function (subject to a normalization constant). It is also not any standard “known” distribution in $\R^m$ like Laplace, Student-t, etc. because those exponents grow linearly or quadratically, not quartically. But we can still continue with the computation. The agreement weight in \eqref{eqn:agreement_weight} becomes the density function:
\[
w(\vector{x}) \ = \ \frac{\kappa_g(\Delta_\vector{x})}{\int \kappa_g(\Delta_\vector{x}) \ d\vector{x}} \ = \ Z \cdot \exp \left( - \left( \frac{b + ||\vector{x}-\boldsymbol{\mu} ||^2}{a} \right)^2 \right),
\quad Z = \frac{C}{\int \kappa_g(\Delta_\vector{x}) \ d\vector{x}}
\]
We perform the change of variables:
\[
\vector{y} = \vector{x} - \boldsymbol{\mu},
\qquad \vector{x} = \vector{y} + \boldsymbol{\mu},
\qquad d\vector{x} = d\vector{y}
\]
Then, the weighted consensus computation is done as follows:
\begin{eqnarray*}
\int w(\vector{x}) \cdot \vector{x} \ d \vector{x}
& = &
Z \cdot \int \exp \left( - \left( \frac{b + ||\vector{x}-\boldsymbol{\mu} ||^2}{a} \right)^2 \right) \cdot \vector{x} \ d \vector{x}
\ = \
Z \cdot \int \exp \left( - \left( \frac{b + ||\vector{y}||^2}{a} \right)^2 \right) \cdot (\vector{y} + \boldsymbol{\mu}) \ d \vector{y} \\
& = &
Z \cdot \left\{\underbrace{\int \exp \left( - \left( \frac{b + ||\vector{y}||^2}{a} \right)^2 \right) \cdot \vector{y} \ d \vector{y}}_{= \ 0} + \underbrace{\int \exp \left( - \left( \frac{b + ||\vector{y}||^2}{a} \right)^2 \right) \cdot \boldsymbol{\mu} \ d \vector{y}}_{= \ 1/Z \cdot \boldsymbol{\mu}} \right\}
\end{eqnarray*}
The integrand of the first integral is odd. Since $\R^m$ is symmetric about the origin, the integral evaluates to zero. The second integral is computed by:
\[
\int \exp \left( - \left( \frac{b + ||\vector{y}||^2}{a} \right)^2 \right) \cdot \boldsymbol{\mu} \ d \vector{y}
\ = \
\boldsymbol{\mu}
\int \exp \left( - \left( \frac{b + ||\vector{x} - \boldsymbol{\mu}||^2}{a} \right)^2 \right) \ d \vector{x}
\ = \
\frac{1}{Z} \boldsymbol{\mu}
\]
Putting all together we finally obtain:
\[ \int w(\vector{x}) \cdot \vector{x} \ d \vector{x} \ = \ \boldsymbol{\mu} \]
\end{proof}

\subsubsection{Proof of Theorem \ref{thm:bound_1}}
\label{app:bound_1}

\theoremBoundA*

\begin{proof}
We first decompose the deviation from the inlier mean:  
\[
\bar{\vector{x}} - \vector{\mu}_\I \ = \ \underbrace{\sum_{i\in \I} w_i (\vector{x}_i - \vector{\mu}_\I)}_{\text{inlier contribution}} + \underbrace{\sum_{i \in \Out} w_i (\vector{x}_i - \vector{\mu}_\I)}_{\text{outlier contribution}}
\]
We apply the triangle inequality to obtain:
\[
\|\bar{\vector{x}} - \vector{\mu}_I\|
\ \leq \
\sum_{i\in \I} w_i \|\vector{x}_i - \vector{\mu}_\I\| + \sum_{i \in \Out} w_i \|\vector{x}_i - \vector{\mu}_\I\|
\ \leq \
w_{\max} \sum_{i\in \I} \|\vector{x}_i - \vector{\mu}_\I\| + \sum_{i \in \Out} w_i \|\vector{x}_i - \vector{\mu}_\I\|
\]
\end{proof}


\begin{table}[tbh]
\begin{center}
\caption{SNAP averaging in $\R^m$ (extended).}
\label{tab:vector_experiment_extended}
\textbf{$\R^{2}$}
\begin{tabular}{ll|rrrrr}
\toprule
 Outlier \% & & 10\% & 20\% & 30\% & 40\% & 49\% \\
\midrule
\midrule
\textit{Inlier Mean} &  & \textit{0.095} & \textit{0.102} & \textit{0.104} & \textit{0.110} & \textit{0.121} \\
\midrule
Mean &  & 0.720 & 1.422 & 2.128 & 2.840 & 3.476 \\
Weiszfeld &  & 0.190 & 0.379 & 0.644 & 1.052 & 1.677 \\
Component Median &  & 0.223 & 0.435 & 0.715 & 1.105 & 1.632 \\
MoM 95\% &  & 0.587 & 1.315 & 2.008 & 2.739 & 3.411 \\
MoM Sqrt &  & 0.644 & 1.360 & 2.052 & 2.782 & 3.428 \\
\midrule
\multirow[c]{2}{*}{SNAP $\ell_2$ $\kappa_l$} & $\bmad$ & 0.162 & 0.244 & 0.334 & 0.484 & 0.787 \\
 & $\bmed$ & 0.152 & 0.223 & 0.313 & 0.475 & 0.804 \\
\midrule
\multirow[c]{2}{*}{SNAP $\ell_2$ $\kappa_g$} & $\sigmamad$ & 0.139 & \textbf{0.182} & \textbf{0.228} & 0.390 & 0.756 \\
 & $\sigmamed$ & 0.142 & 0.191 & 0.244 & \textbf{0.373} & \textbf{0.711} \\
\midrule
\multirow[c]{2}{*}{SNAP $\ell_2^2$ $\kappa_l$} & $\bmad$ & 0.150 & 0.248 & 0.382 & 0.731 & 1.408 \\
 & $\bmed$ & 0.140 & 0.232 & 0.384 & 0.801 & 1.599 \\
\midrule
\multirow[c]{2}{*}{SNAP $\ell_2^2$ $\kappa_g$} & $\sigmamad$ & \textbf{0.134} & 0.196 & 0.314 & 0.815 & 1.693 \\
 & $\sigmamed$ & 0.137 & 0.207 & 0.310 & 0.680 & 1.523 \\
\bottomrule
\end{tabular}

\vspace{5mm}
\textbf{$\R^{50}$}
\begin{tabular}{ll|rrrrr}
\toprule
Outlier \% & & 10\% & 20\% & 30\% & 40\% & 49\% \\
\midrule
\midrule
\textit{Inlier Mean} &  & \textit{0.519} & \textit{0.549} & \textit{0.591} & \textit{0.635} & \textit{0.682} \\
\midrule
Mean &  & 3.588 & 7.103 & 10.645 & 14.178 & 17.358 \\
Weiszfeld &  & 0.846 & 1.637 & 2.812 & 4.741 & 8.424 \\
Component Median &  & 1.143 & 2.213 & 3.627 & 5.554 & 8.108 \\
MoM 95\% &  & 2.907 & 6.442 & 10.125 & 13.766 & 16.975 \\
MoM Sqrt &  & 3.251 & 6.734 & 10.323 & 13.905 & 17.224 \\
\midrule
\multirow[c]{2}{*}{SNAP $\ell_2$ $\kappa_l$} & $\bmad$ & 0.737 & 0.745 & 0.744 & 0.724 & 0.800 \\
 & $\bmed$ & 0.557 & 0.590 & 0.633 & 0.675 & 0.833 \\
\midrule
\multirow[c]{2}{*}{SNAP $\ell_2$ $\kappa_g$} & $\sigmamad$ & 0.637 & 0.640 & 0.641 & 0.652 & 0.911 \\
 & $\sigmamed$ & 0.540 & 0.569 & 0.609 & 0.648 & \textbf{0.699} \\
\midrule
\multirow[c]{2}{*}{SNAP $\ell_2^2$ $\kappa_l$} & $\bmad$ & 0.687 & 0.705 & 0.715 & 0.704 & 1.758 \\
 & $\bmed$ & 0.540 & 0.574 & 0.618 & 0.677 & 2.107 \\
\midrule
\multirow[c]{2}{*}{SNAP $\ell_2^2$ $\kappa_g$} & $\sigmamad$ & 0.614 & 0.621 & 0.627 & 0.642 & 3.039 \\
 & $\sigmamed$ & \textbf{0.529} & \textbf{0.559} & \textbf{0.599} & \textbf{0.639} & 0.894 \\
\midrule
\bottomrule
\end{tabular}
\end{center}
\end{table}

\subsection{Averaging in $\R^m$}
\label{app:weiszfeld}

Table~\ref{tab:vector_experiment_extended} shows the results of SNAP on vector averaging in $\R^m$ using three additional comparison methods. Component Median is the component-wise median of the vectors. MoM 95\% and MoM Sqrt are two variants of multivariate median of means \cite{Hogsgaard2025} which partitions the dataset into a number random blocks, computing a mean for each. The geometric median of these means is the final aggregation result. For MoM 95\%, the number of blocks is chosen to be optimal estimator with $95\%$ probability \cite{LugosiMendelson2019}, while MoM Sqrt selects the number of blocks by the squre root of the number of vectors, a common choice for unknown data. For comparison, the Mean was included as upper bound of the error.

One can clearly see that SNAP outperforms all variants. For most fractions of outliers, they show worse results than Weiszfeld. In the case of median of means, this is in part caused by the high number of outliers, causing most partitions to contain outliers which in turn causes high errors in the mean computation which cannot be compensated by the geometric median, leading to only a small improvement over the normal arithmetic mean.

In contrast, SNAP manages to overcome the arithmetic mean by its high-quality weights computed with the Self-Consistent Agreement Principle, allowing the non-iterative SNAP weighted mean to outperform all compared methods.


\subsection{Moving Average}
\label{app:MA}

\noindent
{\bf Method description.}
Given $\{x_t\}_{i=1}^n\!\!\subset\!\! \R$, time series smoothing seeks a smoothed time series $\{\hat{x}_t\}_{i=1}^n$ that represents the underlying trend of the original data without noise. Exponentially Moving Average (EMA) is a popular method for this task, which computes a weighted sum of previous data points, where each point is weighted by the difference in time steps by an exponentially decreasing factor $\alpha$.

To maintain the iterative nature of time series smoothing and the time-based weighting of values in EMA, we define the local disagreement scores of $\{x_{t-s}, ..., x_t\}$ in a window of length $s$ for each time step $t$ as follows:
\begin{equation}
    \DA_{i,t} \ = \ \frac{\sum_{j=t-s}^t (1-\alpha)^{t-j} |x_i - x_j|}{\sum_{i=t-s}^t \sum_{j=t-s}^t (1-\alpha)^{t-j} |x_i - x_j|}
\end{equation}
By incorporating $\alpha$ similarly to EMA, disagreements between recent values are highlighted, while disagreements with values further in the past receive less importance. Using SNAP weights:
\begin{equation}
w_{i,t} \ = \ \frac{\kappa(\DA_{i,t})}{\sum_{j=t-s}^t \kappa(\DA_{j,t})}
\end{equation}
we can then perform SNAP moving average as a simple weighted sum $\hat{x}_t = \sum_{i=t-s}^t w_{i,t} x_i$. Note that although all $x_i$ influence the final result in EMA, one can in practice restrict the total window size $s$ due to exponential decay of $\alpha$ and a typical choice is $s = 2/\alpha - 1$.

\noindent
{\bf Experimental results.}
Table~\ref{tab:MA} shows the results of SNAP on an example time series dataset, compared to standard EMA as the RMSE to the ground truth data.
The dataset consists of 100 independent time series. Each series was generated by first evaluating the ground truth function $\sin(x)$ at 200 equally spaced points over the interval $[0, 2\pi]$.
Then, Gaussian noise was added with $\sigma=0.1$ to all points, followed by an additional Gaussian noise with $\sigma=0.8$ added to 10\% of the points. An example of this dataset is shown in Figure~\ref{fig:ema}. For SNAP, the window size $s$ was restricted to $s=2/\alpha-1$. SNAP outperforms EMA for small values of $\alpha$ that are relevant for data smoothing. For completeness reason we show the experimental results for all $\alpha$ values. Figure~\ref{fig:ema} Shows an example result for the EWM dataset computed with $\alpha=0.2$.

\begin{table}[t]
\caption{{\bf Moving average.} Average RMSE results of 100 trials on an example timeseries for timeseries smoothing. Note that $\alpha > 0.5$ is seldom used in practice, but is included for completeness.}
\label{tab:MA}
\begin{center}
    \begin{small}
        \begin{sc}
\begin{tabular}{ll|rrrrrrrrr}
\toprule
$\alpha,~ k=2/\alpha-1$ & & 0.1 & 0.2 & 0.3 & 0.4 & 0.5 & 0.6 & 0.7 & 0.8 & 0.9 \\
\midrule
\midrule
EMA & & 0.1892 & 0.1300 & 0.1281 & 0.1388 & \textbf{0.1565} & \textbf{0.1751} & 0.1953 & 0.2168 & 0.2398 \\
\midrule
\multirow[c]{2}{*}{SNAP $\kappa_l(x)$} & $\bmad$ & \textbf{0.1577} & \textbf{0.1052} & \textbf{0.0997} & \textbf{0.1167} & 0.1895 & 0.2207 & 0.2207 & 0.2207 & 0.2207 \\
 & $\bmed$ & 0.1638 & 0.1080 & 0.1034 & 0.1227 & 0.1935 & 0.2207 & 0.2207 & 0.2207 & 0.2207 \\
\midrule
\multirow[c]{2}{*}{SNAP $\kappa_g(x)$} & $\sigmamad$ & 0.1706 & 0.1124 & 0.1085 & 0.1260 & 0.1616 & 0.1943 & \textbf{0.1943} & \textbf{0.1943} & \textbf{0.1943} \\
 & $\sigmamed$ & 0.1668 & 0.1104 & 0.1082 & 0.1298 & 0.1778 & 0.2064 & 0.2064 & 0.2064 & 0.2064 \\
\bottomrule
\end{tabular}
        \end{sc}
    \end{small}
\end{center}
\vskip -0.1in
\end{table}

\begin{figure}[h!]
    \centering
    \includegraphics[width=0.8\linewidth]{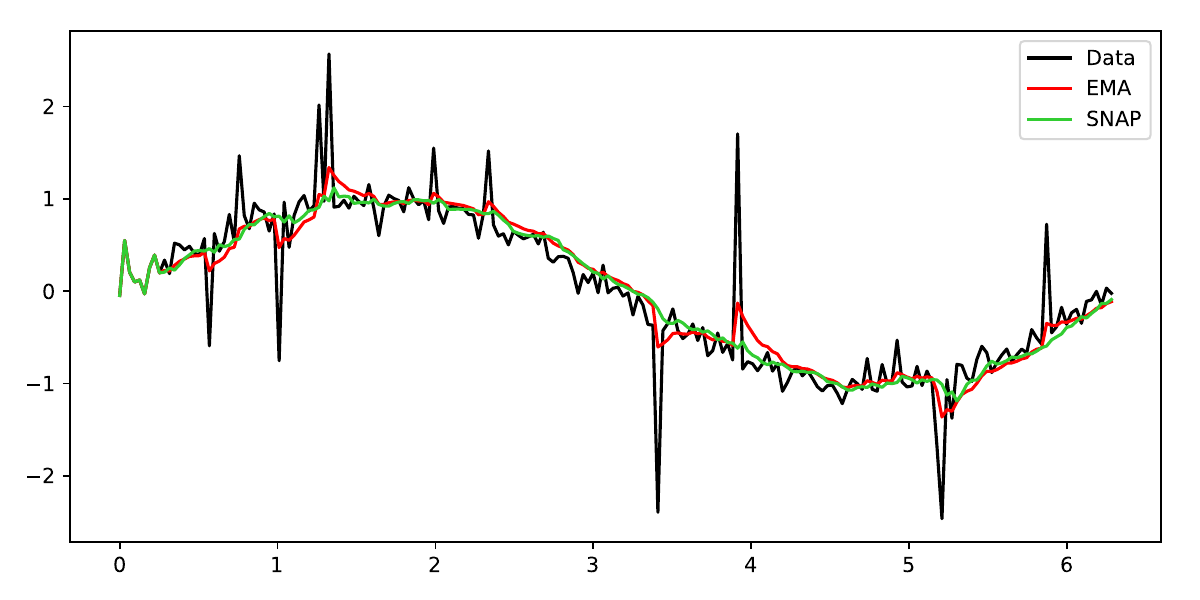}
    \caption{{\bf Moving average.} Example result for Exponential Weighted Moving Average, with and without SNAP computed weights.\label{fig:ema}}
\end{figure}

\end{document}